\newtheorem{theorem}{Theorem}[section]
\newtheorem{corollary}{Corollary}[theorem]
\newtheorem{lemma}[theorem]{Lemma}
\def\BibTeX{{\rm B\kern-.05em{\sc i\kern-.025em b}\kern-.08em
    T\kern-.1667em\lower.7ex\hbox{E}\kern-.125emX}}
\DeclareRobustCommand{\textsupsub}[2]{{%
  \m@th\ensuremath{%
    ^{\mbox{\fontsize\sf@size\z@#1}}%
    _{\mbox{\fontsize\sf@size\z@#2}}%
  }%
}}
\title{{\LARGE \textbf{A Sufficient Condition for Convex Hull Property in General Convex Spatio-Temporal Corridors}}}
\author{Weize Zhang$^{1*}$, Peyman Yadmellat$^{1}$, and Zhiwei Gao$^{2}$
\thanks{$^{1}$Noah’s Ark Lab., Huawei Technologies Canada, Markham, Ontario,
Canada L3R 5A4.}
\thanks{$^{2}$Noah’s Ark Lab., Beijing Huawei Digital Technologies Co. Ltd.,
Beijing, China.}
\thanks{$^{*}$Correspondence: \texttt{\small {weize.zhang@huawei.com}}}
}
\begin{document}

\maketitle
\thispagestyle{empty}
\pagestyle{empty}

\begin{abstract}
Motion planning is one of the key modules in autonomous driving systems to generate trajectories for self-driving vehicles to follow. A common motion planning approach is to generate trajectories within semantic safe corridors. The trajectories are generated by optimizing parametric curves (\textit{e.g.} Bezier curves) according to an objective function. To guarantee safety, the curves are required to satisfy the convex hull property, and be contained within the safety corridors. The convex hull property however does not necessary hold for time-dependent corridors, and depends on the shape of corridors. The existing approaches only support simple shape corridors, which is restrictive in real-world, complex scenarios. In this paper, we provide a sufficient condition for general convex, spatio-temporal corridors with theoretical proof of guaranteed convex hull property. The theorem allows for using more complicated shapes to generate spatio-temporal corridors and minimizing the uncovered search space to $O(\frac{1}{n^2})$ compared to $O(1)$ of trapezoidal corridors, which can improve the optimality of the solution. Simulation results show that using general convex corridors yields less harsh brakes, hence improving the overall smoothness of the resulting trajectories.
\end{abstract}

%\begin{IEEEkeywords}
%motion planning, convex corridor, time dimension, convex hull property, smoothness
%\end{IEEEkeywords}
\section{Introduction}
% Autonomous vehicles will revolutionize how people travel in the future \cite{pettersson2015setting}. 

Motion planning is one of the main components of an autonomous driving system with an essential role in generating feasible, safe trajectories. Semantic safe corridors and Bezier curves are commonly employed in optimization-based motion planning to guarantee safety, thanks to the convex hull property of Bezier curves. The idea is to generate a spatial safety corridor based on the surrounding occupancies, and use the corridor as a constraint in trajectory planning. The trajectories are then generated in $SLT$ (Frenet) frame~\cite{fan2018baidu} or $XYT$ (Cartesian) frame~\cite{sun2018fast}, aiming to optimize the overall swiftness, smoothness, and smartness \cite{fleury1995primitives}, 
under the constraints of continuity, vehicle dynamics limitation, traffic rules, and above all, safety \cite{lu2020adaptive}.  

One reason for using optimization-based approaches is that convex corridors
guarantee the convexity of the problem, allowing for solving the motion planning problem as a quadratic programming problem \cite{xu2014motion}, which are well-studied and can be implemented using off-the-shelf solvers~\cite{qian2016optimal}. Among different parametric curves, piece-wise Bezier curves are more popular as the convex hull property of Bezier curves\footnote{The Bezier curve defined by $n+1$ control points lies in the convex hull of the given control points, which is also called the control polygon.} \cite{aziz1990bezier} guarantees the curve segments to be confined within the corresponding 
safety corridor~\cite{gao2018online}. As a result, the safety of the generated trajectories are guaranteed. 

\begin{figure}[tbp]
\begin{center}
\includegraphics[width=8cm]{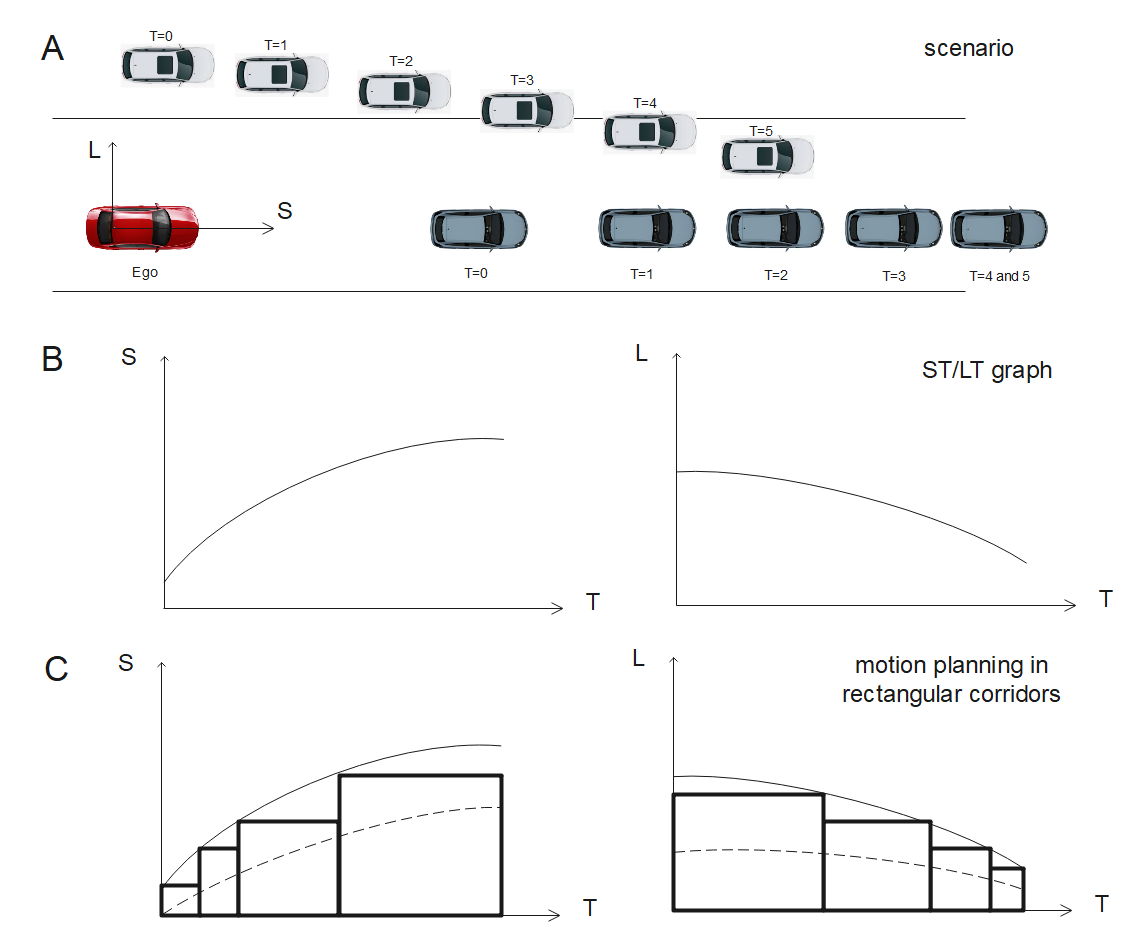}
\end{center}
\vspace{-0.2 in}
\caption{A) A scenario where the front vehicle decelerates, and the side vehicle pushes the ego vehicle to the right. B) The corresponding $ST/LT$ graphs. C) Motion planning in $ST/LT$ plane using rectangular spario-temporal corridors.}
\label{Fig:sampleProb}
\vspace{-0.2 in}
\end{figure}

The spatial corridors can be extended to spatio-temporal corridors to account for dynamic objects and predicted occupations in upcoming environments. When corridors consist of spatial dimension with no time-dependency (\textit{e.g.} defined in $XY$ plane or in $SL$ plane), the convex hull property holds, which allows for using general convex corridors for maximum search space coverage as utilized in~\cite{liu2018convex}.
However, when there is a time-dependency, \textit{i.e.} corridors are defined in spatio-temporal frames (\textit{e.g.} $ST$ or $LT$ planes), the convex hull property does not generally hold, as shown in Fig. \ref{Fig:ConvexHullNotHold}, imposing a limitation on the shape of corridors. This is stemmed from the fact that the spatial axis and the temporal axis are not equivalent. 

The convex hull property problem is often addressed by restricting the shape of corridors to be rectangular~\cite{ding2019safe}, where the convex hull property is ensured, resulting in a large uncovered search space and conservative or unsmoothed trajectories. The uncovered search space may lead to optimization failure in crowded scenarios, where no trajectories can be found within the corridors. As an extension, multiple rectangular corridors can be chained together to reduce the uncovered search space as illustrated in Fig.~\ref{Fig:sampleProb}, at the expense of significantly increasing the dimension of the optimization vector. This approach tends to be computationally intensive as it requires a sequential optimization steps to optimize a trajectory segment for each corridor and ensuring continuity from one segment to the next. Another method is to use a sampling based approach that supports general convex corridor shapes \cite{moghadam2020autonomous}. However the sampling-based approaches generally results in sub-optimal solutions, and there is no guarantee to find an optimal trajectory. A recent approach is to employ trapezoidal corridors \cite{li2021speed} with a sufficient condition for convex hull property to hold. The trapezoidal shape is effective in reducing the number of corridors, specifically in dynamic scenarios when there are objects with constant speed. While this is an improvement compared to the previous methods, the uncovered space can still be significant in certain scenarios. Fig.~\ref{Fig:comparison_of_shapes} A1 shows a case when there is a decelerating front vehicle, and Fig.~\ref{Fig:comparison_of_shapes} A2 shows a cut-in vehicle that with accelerating and decelerating speed profile. By comparing Fig.~\ref{Fig:comparison_of_shapes} B1/C1 and Fig.~\ref{Fig:comparison_of_shapes} B2/C2, one can see that in both cases, trapezoidal corridors have larger uncovered search space than that of general convex corridors, which may lead to unnecessary harsh brakes in motion planning.

\begin{figure}[tbp]
\begin{center}
\includegraphics[width=8cm]{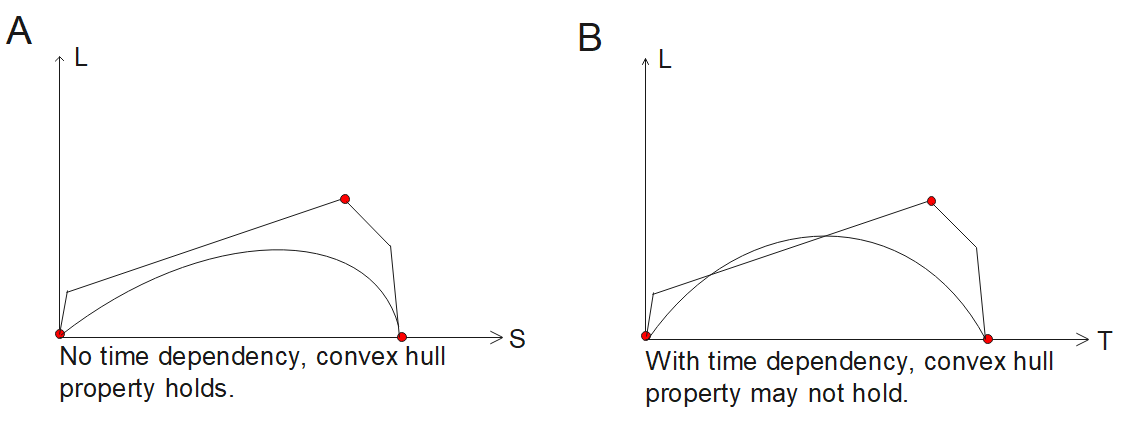}
\end{center}
\vspace{-0.2 in}
\caption{Convex hull property does not hold for spatio-temporal corridors of general convex shapes. A) The corridor is in $SL$ plane. Picking three control points along the boundary of the corridor, the generated
Bezier curve lies in the corridor. B) The corridor is in $LT$ plane. Picking three control points along the boundary of the corridor, the generated
Bezier curve is partially beyond the corridor. }
\label{Fig:ConvexHullNotHold}
\vspace{-0.2 in}
\end{figure}

The convex hull property in the time-dependent cases depends on the shape of the corridor, which precludes using arbitrary shaped corridors. Naturally, more complicated shapes cover more search space and yield better optimization results. An ideal case is to use general convex-shaped corridors. As the main contribution of this paper, we prove a sufficient condition to guarantee the convex hull property for general convex corridors in spatio-temporal frames. The use of general convex corridors can shrink the uncovered search space to $O(\frac{1}{n^2})$, compared to $O(1)$ of trapezoidal corridors.

This paper is organized as follows: Section \ref{ProblemFormulation} converts the main theorem into an equivalent theorem. Section \ref{Proof} decomposes the theorem into several lemmas and proves them by recurrence. Section \ref{SearchSpaceCoverage} shows that the uncovered search space is $O(\frac{1}{n^2})$ under continuity assumptions. Section \ref{Simulation} compares the proposed approach with the state of the art approach quantitatively under simulated scenarios. Section \ref{Discussion} provides deeper analysis to the proposed approach. Section \ref{CONCLUSIONS} presents our conclusions.

\begin{figure}[tbp]
\begin{center}
\includegraphics[width=8.5cm]{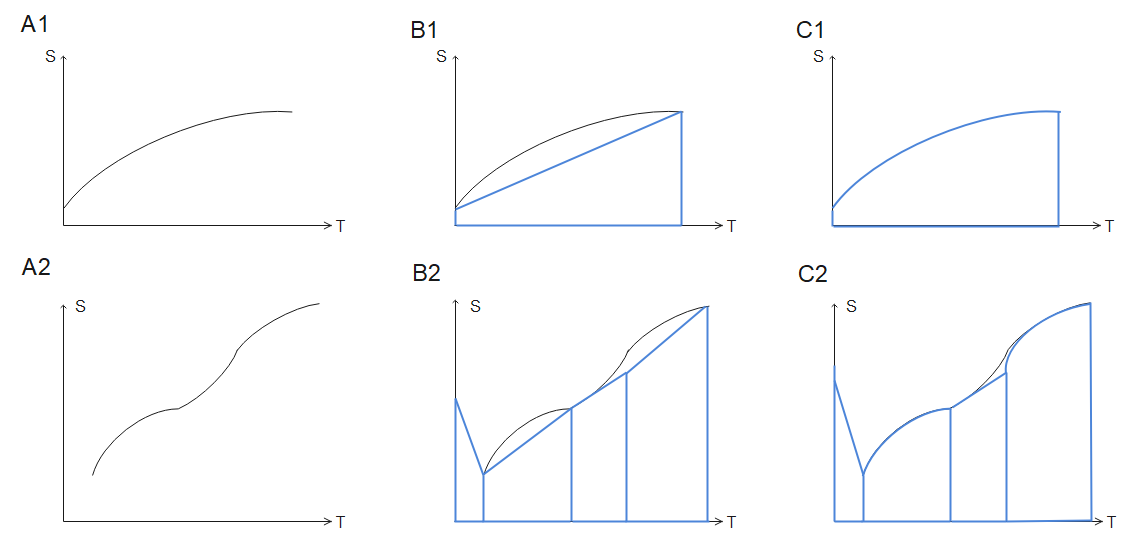}
\end{center}
\vspace{-0.2 in}
\caption{The advantage of general convex corridors is that they cover more search space. A1) $ST$ graph of a decelerating front vehicle. A2) $ST$ graph of a cut-in vehicle that switches between accelerating and decelerating. B1) The coverage of 1 trapezoidal corridor. B2) the coverage of 4 trapezoidal corridors. C1) The coverage of 1 general convex corridor. C2) The coverage of 4 general convex corridors.}
\label{Fig:comparison_of_shapes}
\vspace{-0.2 in}
\end{figure}
\section{Problem Formulation} \label{ProblemFormulation}

\begin{figure}[tbp]
\begin{center}
\includegraphics[width=8cm]{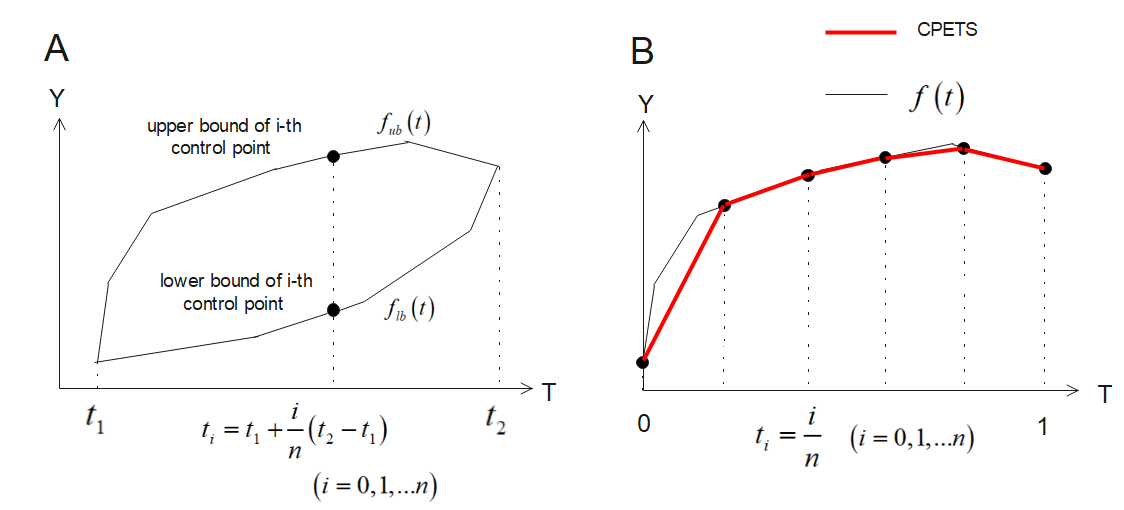}
\end{center}
\vspace{-0.2 in}
\caption{A) The range of the i-th control point that guarantees convex hull property. B) The problem scaled back to [0, 1]. It suffices to consider the concave upper bound function $f(t)$ and the control points picked along the upper bound with equal time spacing. The poly-line of the control polygon formed with these control points is defined as CPETS (in red poly-line).}
\label{Fig:probFormulation}
\vspace{-0.2 in}
\end{figure}

Considering a convex spatio-temporal corridor defined in $\left[t_{1}, t_{2}\right]$. The upper bound function $f_{ub}(t)$ is concave, while the lower bound function $f_{lb}(t)$ is convex. We pick $n+1$ control points in the corridor to form a scaled Bezier curve ~\cite{ding2019safe} of degree $n$ which is defined in $\left[t_{1}, t_{2}\right]$. The vertical axis label $Y$ is a generic spacial label, which could represent $X$, $Y$, $S$, $L$, or any spacial variable, as shown in Fig. \ref{Fig:probFormulation} A. The vertical coordinate of the i-th control point is noted as $s_{i}$. We want to find a sufficient condition to make sure the Bezier curve lies in the corridor.

\begin{theorem}
If the series $s_{i}$ (i=0,\;1,\;...,\;n) satisfies
\begin{equation}\begin{split}
s_{i} \in \left[f_{lb}(t_{1}+\frac{i}{n}(t_{2}-t_{1})), f_{ub}(t_{1}+\frac{i}{n}(t_{2}-t_{1}))\right]
\label{eq.0}
\end{split}\end{equation}
the scaled Bezier curve generated from control points lies in the corridor.
\label{theorem:baseTheorem}
\end{theorem}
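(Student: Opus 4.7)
The plan is to exploit two structural features: the scaled Bezier curve's time coordinate is linear in the parameter $u=(t-t_1)/(t_2-t_1)\in[0,1]$, so the corridor-containment constraint collapses to a scalar inequality on $[0,1]$; and the Bernstein form is linear and coordinate-wise monotone in the control values $s_i$. First I would reparameterize to the unit interval and write $B(u)=\sum_{i=0}^{n}\binom{n}{i}u^{i}(1-u)^{n-i}s_{i}$, together with the rescaled bound functions $\bar g(u)=f_{ub}(t_1+u(t_2-t_1))$ (concave) and $\underline g(u)=f_{lb}(t_1+u(t_2-t_1))$ (convex). Containment then becomes the pair of inequalities $\underline g(u)\le B(u)\le\bar g(u)$ for every $u\in[0,1]$.

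By the monotonicity of $B(u)$ in each $s_i$, the upper bound is tightest at $s_i=\bar g(i/n)$ and the lower at $s_i=\underline g(i/n)$, and the two sides are related by the symmetry $g\mapsto -g$ which swaps concavity and convexity. So everything reduces to the single claim: if $g\colon[0,1]\to\mathbb{R}$ is concave, then
\begin{equation*}
B_n[g](u)\;:=\;\sum_{i=0}^{n}\binom{n}{i}u^{i}(1-u)^{n-i}g(i/n)\;\le\;g(u).
\end{equation*}
There is a one-line probabilistic proof I would sketch first: with $X\sim\mathrm{Bin}(n,u)$ the Bernstein weights are exactly the pmf of $X$, so the left side equals $\mathbb{E}[g(X/n)]$, and Jensen's inequality combined with $\mathbb{E}[X/n]=u$ gives the result.

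For an inductive version aligned with the paper's stated recurrence strategy, I would proceed by induction on $n$. The base case $n=1$ is simply concavity applied to the two endpoints. For the step, I would use Pascal's identity to split $B_n[g](u)=(1-u)A_n(u)+u\,C_n(u)$, where $A_n$ and $C_n$ are degree-$(n-1)$ Bernstein-type sums with control values $\{g(i/n)\}_{i=0}^{n-1}$ and $\{g((i+1)/n)\}_{i=0}^{n-1}$ respectively. I would then re-express each via an affine reparameterization of the domain — using $\tilde g(v)=g(\tfrac{n-1}{n}v)$ for $A_n$ and an analogous shifted rescaling for $C_n$ — so that the hypothesis yields $A_n(u)\le g(\tfrac{n-1}{n}u)$ and $C_n(u)\le g(\tfrac{1}{n}+\tfrac{n-1}{n}u)$. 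One further application of concavity to the convex combination $(1-u)\tfrac{n-1}{n}u+u\bigl(\tfrac{1}{n}+\tfrac{n-1}{n}u\bigr)=u$ then closes the induction.

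The main obstacle I anticipate is this node-rescaling in the inductive step: the Bernstein nodes $\{i/n\}$ do not coincide with $\{i/(n-1)\}$, so the hypothesis cannot be applied verbatim. Handling it cleanly requires the (easy) observation that concavity is preserved under affine reparameterization of the domain, together with careful bookkeeping to verify that the two intermediate bounds recombine via a second use of concavity back to exactly $g(u)$ rather than something weaker. Once the upper-bound inequality is established, the lower bound follows by applying the symmetric argument to $-\underline g$, completing the proof of Theorem~\ref{theorem:baseTheorem}.
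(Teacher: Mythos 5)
Your proof is correct, and it takes a genuinely different and considerably more economical route than the paper. The paper reduces to the concave upper bound $f$ and then proves the stronger, piecewise statement that the Bezier curve lies below its control polygon (the CPETS) on every subinterval $\left[\tfrac{j}{n},\tfrac{j+1}{n}\right]$: this is done by induction on the degree via the de~Casteljau recurrence, combined with a monotonicity-based time-shift argument, and it requires first treating monotone $f$ and then gluing the ascending and descending pieces through the envelope functions $f_1,f_2$ and a pointwise minimum. You instead go straight for the single inequality $B_n[g](u)\le g(u)$ for concave $g$, which you get in one line from Jensen's inequality, since the Bernstein weights are the $\mathrm{Bin}(n,u)$ probability mass function and $\mathbb{E}[X/n]=u$; your reduction of the two-sided corridor constraint to this one inequality (nonnegativity of the Bernstein basis plus the sign flip $g\mapsto -g$ for the convex lower bound) is also sound, and because Jensen for a finite convex combination is just the definition of concavity, no continuity or case analysis on the monotonicity of $f$ is needed. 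Your backup induction via Pascal's identity and the affine node rescalings $\tilde g(v)=g\bigl(\tfrac{n-1}{n}v\bigr)$, $\hat g(v)=g\bigl(\tfrac1n+\tfrac{n-1}{n}v\bigr)$, recombined through $(1-u)\tfrac{n-1}{n}u+u\bigl(\tfrac1n+\tfrac{n-1}{n}u\bigr)=u$, also checks out. What your argument does not deliver is the intermediate bound $C(t)\le\text{CPETS}$ on each subinterval; that bound is not needed for Theorem~\ref{theorem:baseTheorem} itself, but the paper reuses it in Section~\ref{SearchSpaceCoverage} to establish the $O(\tfrac{1}{n^2})$ estimate of the uncovered search space, so the longer machinery is not wasted there, whereas your approach would need a separate quantitative argument (e.g., the classical $\|B_n[g]-g\|=O(\tfrac1{n^2})$ Voronovskaya-type estimate) to recover that result.
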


Theorem \ref{theorem:baseTheorem} shows that it suffices to pick the control points with equal time spacing, and the range of each control point is the value of the upper bound/lower bound functions at the corresponding time. 

To simplify the proof of theorem \ref{theorem:baseTheorem}, without loss of generality, we can scale the time interval back to $\left[0, 1\right]$, consider the upper bound function $f(t)$ only, and pick the control points along the upper bound function with equal time spacing (which will generate the highest possible Bezier curve). The vertical coordinate of the i-th control point thus satisfies $s_{i}=f(\frac{i}{n})$. The poly-line of the control polygon formed with these control points is defined as CPETS (control polygon of equal time spacing), as shown in the red poly-line in Fig. \ref{Fig:probFormulation} B. Due to the property of concave functions, we have $\text{CPETS} \leq f(t)$. This yields theorem \ref{theorem:equivalentTheorem}, which is equivalent to theorem \ref{theorem:baseTheorem}.

\begin{theorem}
$\forall t\in\left[0, 1\right]$, the Bezier curve $C(t)=\sum_{i=0}^{n}s_{i}B\textsupsub{$n$}{$i$}(t)$ satisfies $C(t)$$\leq$CPETS$\leq$$f(t)$, where $f(t)$ is a concave function, $s_{i}=f(\frac{i}{n})$, and 
$B\textsupsub{$n$}{$i$}(t)$ is the Bernstein polynomial which is defined as $B\textsupsub{$n$}{$i$}(t)=C\textsupsub{$i$}{$n$}t^{i}(1-t)^{n-i}$.
\label{theorem:equivalentTheorem}
\end{theorem}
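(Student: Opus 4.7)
The plan is to split the two-part inequality into the trivial piece and the substantive piece. The right-hand bound $\text{CPETS}(t) \le f(t)$ is an immediate consequence of concavity of $f$: on each interval $[k/n,(k+1)/n]$ the function $\text{CPETS}$ is the chord between $(k/n,f(k/n))$ and $((k{+}1)/n,f((k{+}1)/n))$, so writing $t = (1-\tau)(k/n) + \tau((k{+}1)/n)$ gives
\begin{equation*}
\text{CPETS}(t) = (1-\tau)f(k/n) + \tau f((k{+}1)/n) \le f(t).
\end{equation*}
This needs no further work and does not even exploit the equal spacing of the samples.

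The interesting inequality is $C(t) \le \text{CPETS}(t)$. My proposed approach is to view $g(x) := \text{CPETS}(x)$ as a concave function on $[0,1]$ in its own right and then invoke Jensen's inequality. Concavity of $g$ follows from the fact that the sampled values $s_i = f(i/n)$ form a concave sequence: the midpoint inequality $f((i{+}1)/n) \ge \tfrac12\bigl(f(i/n)+f((i{+}2)/n)\bigr)$ holds because $f$ is concave, which is exactly the statement that the consecutive slopes $n(s_{i+1}-s_i)$ of the piecewise-linear interpolant are non-increasing in $i$, so $g$ is concave. Next, the Bernstein polynomials $\{B_i^n(t)\}_{i=0}^n$ are a probability distribution on the nodes $\{i/n\}$, and the standard binomial-mean identity $\sum_{i=0}^n (i/n)\,B_i^n(t) = t$ identifies the mean of that distribution with $t$. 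Since $s_i = g(i/n)$, rewriting
\begin{equation*}
C(t) = \sum_{i=0}^n g(i/n)\,B_i^n(t)
\end{equation*}
and applying Jensen's inequality to the concave $g$ yields
\begin{equation*}
C(t) \;\le\; g\!\left(\sum_{i=0}^n (i/n)\,B_i^n(t)\right) = g(t) = \text{CPETS}(t),
\end{equation*}
which is precisely the desired bound.

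The genuine technical content therefore concentrates into two small lemmas: (a) a concavely sampled sequence induces a concave piecewise-linear interpolant, and (b) the Bernstein moment identity $\sum_i i\,B_i^n(t) = nt$, which falls out of differentiating $(t+(1-t))^n \equiv 1$ or a one-line binomial computation. I do not anticipate a genuine obstacle here. If one instead prefers a recurrence-style proof, as hinted by the paper's announced structure, the natural alternative is to induct on $n$ via the de Casteljau decomposition $C_n(t) = (1-t)\,C_{n-1}^L(t) + t\,C_{n-1}^R(t)$, where $C_{n-1}^L$ and $C_{n-1}^R$ are built from the two sub-polygons $(s_0,\ldots,s_{n-1})$ and $(s_1,\ldots,s_n)$. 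The chief difficulty in that route is an indexing mismatch: the natural CPETS for each sub-polygon would use $n-1$ subintervals rather than $n$, so one must compare the sub-curves against pieces of the original CPETS instead of a rescaled copy. That bookkeeping is where I would expect the bulk of the work to live, which is one reason I would favour the Jensen-based argument above.
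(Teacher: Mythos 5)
Your proof is correct, and it takes a genuinely different and considerably shorter route than the paper's. Both halves of your argument check out: the bound $\text{CPETS}(t)\le f(t)$ is indeed just the chord-below-graph property of concave functions (the paper also dispenses with this in one line), and for $C(t)\le \text{CPETS}(t)$ your Jensen argument is sound --- the sampled sequence $s_i=f(i/n)$ is concave because $(i+1)/n$ is the midpoint of $i/n$ and $(i+2)/n$, so the equally-spaced piecewise-linear interpolant $g=\text{CPETS}$ has non-increasing slopes and is concave on $[0,1]$; the Bernstein weights $B_i^n(t)$ are nonnegative and sum to one with mean $\sum_i (i/n)B_i^n(t)=t$; hence $C(t)=\sum_i g(i/n)B_i^n(t)\le g(t)$. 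The paper instead proceeds exactly along the inductive route you describe as the alternative: it first restricts to monotone $f$, inducts on the degree via the recurrence $C_{k+1}(t)=(1-t)C_k^L(t)+tC_k^R(t)$, and resolves precisely the node-mismatch you flag (the sub-polygons live on a grid of spacing $1/k$ rather than $1/(k+1)$) by a shift argument $C(t)\le C(t+\Delta t)$ that relies on monotonicity of $C$; it then handles general concave $f$ by splitting it into an ascending and a descending envelope, and recovers the CPETS bound in that case by observing that the CPETS is its own CPETS. Your approach buys uniformity (no monotone/non-monotone case split, no induction, no shift trick) and is the cleaner proof of the qualitative statement; the paper's heavier recurrence machinery is not wasted, though, since it is reused almost verbatim in Section~\ref{SearchSpaceCoverage} to track the size of the gap between $C(t)$ and the CPETS, which Jensen's inequality does not directly quantify.
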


We will decompose the proof of theorem \ref{theorem:equivalentTheorem} into several lemmas. In this paper, all terms share the same definition as they are defined in theorem~\ref{theorem:equivalentTheorem}, unless stated otherwise.

\section{Proof of the Main Theorem} \label{Proof}

\subsection{Lemmas}

\begin{lemma}
For two $C\textsupsub{$1$}{}$ functions $f_{1}(t)$ and $f_{2}(t)$, if  $f_{1}(t_{0}) \geq f_{2}(t_{0})$ and $\forall t \geq t_{0}$ (resp. $\forall t \leq t_{0}$), 
$f_{1}^{\prime}(t) \geq f_{2}^{\prime}(t)$ (resp. $f_{1}^{\prime}(t) \leq f_{2}^{\prime}(t)$), we have $f_{1}(t) \geq f_{2}(t)$ $\forall t \geq t_{0}$ (resp. $\forall t \leq t_{0}$).
\label{lemma:simpleAscending}
\end{lemma}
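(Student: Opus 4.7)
The plan is to reduce the statement to the elementary fact that a $C^1$ function with nonnegative derivative on an interval is non-decreasing (and similarly for nonpositive derivative). The natural vehicle for this is the function $g(t) = f_1(t) - f_2(t)$, which inherits $C^1$ regularity from $f_1$ and $f_2$; the hypotheses translate into $g(t_0) \geq 0$ together with a one-sided sign condition on $g'$.

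First I would handle the ``$t \geq t_0$'' case. Set $g = f_1 - f_2$, so that $g'(t) = f_1'(t) - f_2'(t) \geq 0$ for all $t \geq t_0$. By the fundamental theorem of calculus,
\begin{equation*}
g(t) \;=\; g(t_0) + \int_{t_0}^{t} g'(s)\, ds,
\end{equation*}
and since both terms on the right are nonnegative, $g(t) \geq 0$, i.e.\ $f_1(t) \geq f_2(t)$. Equivalently, one can invoke the mean value theorem: for any $t > t_0$ there exists $\xi \in (t_0, t)$ with $g(t) - g(t_0) = g'(\xi)(t - t_0) \geq 0$, giving the same conclusion.

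The ``$t \leq t_0$'' case is symmetric. Under the hypothesis $f_1'(t) \leq f_2'(t)$ for $t \leq t_0$, we have $g'(t) \leq 0$ on that side, so
\begin{equation*}
g(t) \;=\; g(t_0) - \int_{t}^{t_0} g'(s)\, ds \;\geq\; g(t_0) \;\geq\; 0,
\end{equation*}
yielding $f_1(t) \geq f_2(t)$ for all $t \leq t_0$.

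There is no real obstacle here; the lemma is essentially a one-line consequence of FTC applied to the difference $f_1 - f_2$. The only minor care point is to state the two symmetric cases cleanly so that the direction of the derivative inequality matches the side of $t_0$ being considered. This lemma will presumably be used repeatedly in later arguments to propagate an inequality that holds at a single point outward using derivative information, so writing it in the symmetric ``two-sided'' form here is the right setup.
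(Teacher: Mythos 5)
Your proof is correct and follows essentially the same route as the paper, which applies the mean value theorem to the difference $f_{1}-f_{2}$ exactly as in your parenthetical alternative; the FTC phrasing and the explicit treatment of the symmetric case are equivalent in substance.
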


\begin{proof}
Using Lagrange’s mean value theorem, $\forall t > t_{0}$,
$\exists \xi > t_{0}$ such that
\begin{equation}\begin{split}
&f_{1}(t)-f_{2}(t) \\
&=f_{1}(t_{0})-f_{2}(t_{0}) + (f_{1}^{\prime}(\xi)-f_{2}^{\prime}(\xi))(t-t_{0}) \ge 0
\label{eq.1}
\end{split}\end{equation}%
\end{proof}

\begin{lemma}
Suppose $s_{i}$ is an ascending series, then the Bezier curve $C(t)=\sum_{i=0}^{n}s_{i}B\textsupsub{$n$}{$i$}(t)$ is an ascending function.
\label{lemma:bezierAscending}
\end{lemma}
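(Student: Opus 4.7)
The plan is to reduce the claim to non-negativity of $C'(t)$ on $[0,1]$ and exploit the standard fact that the derivative of a Bezier curve is again a Bezier curve whose control points are successive differences (scaled by $n$) of the original control points.

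First I would differentiate $C(t)=\sum_{i=0}^{n} s_i B_i^n(t)$ term by term. Using $B_i^n(t)=\binom{n}{i}t^i(1-t)^{n-i}$, a direct computation (or the classical de Casteljau/hodograph identity) gives the derivative formula
\begin{equation*}
C'(t)=n\sum_{i=0}^{n-1}(s_{i+1}-s_i)\,B_i^{n-1}(t).
\end{equation*}
This step is routine; the only care needed is verifying the telescoping of the index shift when combining the two terms that come from differentiating $t^i(1-t)^{n-i}$.

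Next I would observe that the ascending hypothesis $s_0 \le s_1 \le \cdots \le s_n$ yields $s_{i+1}-s_i \ge 0$ for every $i\in\{0,\dots,n-1\}$. Together with $B_i^{n-1}(t)\ge 0$ for all $t\in[0,1]$ (immediate from the definition, since $t\ge 0$, $1-t\ge 0$, and $\binom{n-1}{i}\ge 0$), this makes $C'(t)$ a non-negative combination of non-negative functions, so $C'(t)\ge 0$ on $[0,1]$.

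Finally, to conclude that $C$ is ascending, I would either integrate $C'$ or invoke Lemma \ref{lemma:simpleAscending} with $f_1(t)=C(t)$, $f_2(t)\equiv C(t_0)$, and any reference point $t_0\in[0,1]$: both functions agree at $t_0$, $f_1'(t)=C'(t)\ge 0=f_2'(t)$ for $t\ge t_0$, and $f_1'(t)\le 0$ never since $C'\ge 0$ everywhere, which gives monotone increase on both sides of $t_0$. I do not anticipate any real obstacle; the only step requiring any attention is confirming the derivative identity, and everything else is a non-negativity check on $[0,1]$.
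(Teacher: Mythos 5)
Your proposal is correct and follows essentially the same route as the paper: differentiate to get $C'(t)=n\sum_{i=0}^{n-1}(s_{i+1}-s_i)B\textsupsub{$n-1$}{$i$}(t)$ and conclude non-negativity from the ascending hypothesis and the non-negativity of the Bernstein basis on $[0,1]$. The extra final step (integrating or invoking Lemma \ref{lemma:simpleAscending}) is just making explicit what the paper leaves implicit.
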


\begin{proof}
Taking the derivative of $C(t)$,
\begin{equation}\begin{split}
C^{\prime}(t)=n\sum_{i=0}^{n-1}(s_{i+1}-s_{i})B\textsupsub{$n-1$}{$i$}(t) \ge 0
\end{split}\end{equation}
\end{proof}

\begin{lemma}
Suppose $s_{i}$ is a concave (convex) series, i.e. $s_{i+2}-2s_{i+1}+s_{i} \le 0$ (resp. $\ge 0$) $\forall i \le n-2$,  then the Bezier curve $C(t)=\sum_{i=0}^{n}s_{i}B\textsupsub{$n$}{$i$}(t)$ is a concave (convex) function.
\label{lemma:bezierConcave}
\end{lemma}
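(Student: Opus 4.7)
The plan is to prove concavity of $C(t)$ by directly verifying $C''(t) \le 0$ on $[0,1]$, exploiting the well-known fact that differentiation of a Bezier curve shifts the control-point sequence by taking finite differences and reduces the degree by one.

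First I would compute $C'(t)$ using the same identity that was already invoked in the proof of Lemma~\ref{lemma:bezierAscending}, namely
\begin{equation*}
C'(t) = n\sum_{i=0}^{n-1}(s_{i+1}-s_i)\,B_{n-1}^{i}(t).
\end{equation*}
Then I would apply that identity a second time to the ``new'' control polygon whose $i$-th control value is $s_{i+1}-s_i$. Since $B_{n-1}^i$ has the same structural role in this degree-$(n-1)$ Bezier curve as $B_n^i$ did before, differentiating once more should give
\begin{equation*}
C''(t) = n(n-1)\sum_{i=0}^{n-2}\bigl(s_{i+2}-2s_{i+1}+s_i\bigr)\,B_{n-2}^{i}(t).
\end{equation*}

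The conclusion then follows immediately: by the concave-series hypothesis every coefficient $s_{i+2}-2s_{i+1}+s_i$ is $\le 0$, and on $[0,1]$ each Bernstein polynomial $B_{n-2}^i(t)=C_{n-2}^{i}t^i(1-t)^{n-2-i}$ is $\ge 0$, so $C''(t)\le 0$ for all $t\in[0,1]$, which is exactly concavity of $C$. The convex case is symmetric with all inequalities flipped.

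I don't expect any real obstacle here; the only step that might need a line of justification is the recursion that yields the formula for $C''(t)$. That can be handled either by invoking the standard Bezier derivative identity twice (treating $\{s_{i+1}-s_i\}$ as the control series of a degree-$(n-1)$ Bezier curve and reusing the derivative computation from Lemma~\ref{lemma:bezierAscending}) or, if a self-contained derivation is preferred, by directly differentiating $B_n^i(t)=C_n^i t^i(1-t)^{n-i}$ twice and collecting terms via index shifts.
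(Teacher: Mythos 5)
Your proposal is correct and follows essentially the same route as the paper: the paper's proof consists exactly of the identity $C^{(2)}(t)=n(n-1)\sum_{i=0}^{n-2}(s_{i+2}-2s_{i+1}+s_{i})B\textsupsub{$n-2$}{$i$}(t)\le 0$, which you derive by applying the Bezier derivative formula twice. Your version merely spells out the intermediate step and the nonnegativity of the Bernstein basis, which the paper leaves implicit.
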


\begin{proof}
Taking the second derivative of $C(t)$,
\begin{equation}\begin{split}
C^{(2)}(t)=n(n-1)\sum_{i=0}^{n-2}(s_{i+2}-2s_{i+1}+s_{i})B\textsupsub{$n-2$}{$i$}(t) \le 0
\end{split}\end{equation}
\end{proof}

\begin{corollary}
Suppose $s_{i}$ is a concave (convex) series, the Bezier curve $C(t)=\sum_{i=0}^{n}s_{i}B\textsupsub{$n$}{$i$}(t)$ passes the first and the last control point, and is inferior than the first and the last segment of the CPETS.
\label{corollary:tangentFistLastPoint}
\end{corollary}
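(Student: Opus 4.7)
The plan is to split the corollary into two independent sub-claims: (a) $C(0)=s_0$ and $C(1)=s_n$, and (b) on the whole interval $[0,1]$ the Bezier curve $C(t)$ lies weakly below (above, in the convex case) the line carrying the first CPETS edge, and likewise for the last edge.

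For (a), I would simply evaluate the Bernstein basis at the endpoints. Since $B^n_i(0)=\binom{n}{i}0^i(1-0)^{n-i}$ equals $1$ when $i=0$ and vanishes otherwise, and symmetrically at $t=1$ only $B^n_n(1)=1$ survives, the sum $C(0)=\sum_{i=0}^{n} s_i B^n_i(0)$ collapses to $s_0$, and $C(1)=s_n$. This is immediate and purely a matter of bookkeeping.

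The substantive part is (b), and the central observation is a tangency. The derivative computation in the proof of lemma \ref{lemma:bezierAscending} gives $C'(0)=n(s_1-s_0)$, which matches exactly the slope $(s_1-s_0)/(1/n)$ of the first CPETS edge connecting $(0,s_0)$ to $(1/n,s_1)$. Combined with $C(0)=s_0$ from step (a), this means the tangent line to $C$ at $t=0$ is precisely the line carrying the first CPETS edge. Since $s_i$ is a concave series, lemma \ref{lemma:bezierConcave} delivers that $C(t)$ is concave on $[0,1]$; I would then invoke the tangent-line characterization of concavity to conclude that $C(t)$ lies weakly below this tangent line on all of $[0,1]$. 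A symmetric computation $C'(1)=n(s_n-s_{n-1})$ handles the last edge, and the convex case is identical with all inequalities reversed.

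The only step that is not purely mechanical is the tangent-line characterization of concavity itself, and even this is not really an obstacle: it follows directly from lemma \ref{lemma:simpleAscending} applied to $C$ and its tangent line at $t=0$, once one notes that concavity of $C$ means $C'$ is non-increasing, so to the right of $t=0$ the derivative of $C$ is at most the (constant) slope of the tangent line, and to the left of $t=0$ the reverse holds; in both directions lemma \ref{lemma:simpleAscending} then produces the desired inequality. The main thing to be careful about is interpreting the statement correctly, namely that $C(t)$ is compared against the full line extending the first/last edge rather than against the edge itself; after that, no deeper obstacle remains.
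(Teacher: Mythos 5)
Your proposal is correct and follows essentially the same route as the paper, which simply cites Lemma \ref{lemma:simpleAscending} and Lemma \ref{lemma:bezierConcave} and calls the rest trivial; you have filled in exactly the intended details (endpoint evaluation of the Bernstein basis, the tangency $C'(0)=n(s_1-s_0)$ matching the first CPETS edge's slope, and the derivative-comparison lemma to put the concave curve below its tangent line). The only cosmetic remark is that $t=0$ and $t=1$ are the endpoints of the domain, so the ``to the left of $t=0$'' branch of your argument is vacuous.
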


\begin{proof}
The proof is trivial by applying lemma \ref{lemma:simpleAscending} and lemma \ref{lemma:bezierConcave}.
\end{proof}

\begin{lemma}
$\forall n > 1$, if the upper bound function $f(t)$ defined in $\left[0,1\right]$ is ascending (resp. descending) and concave, for series $s_{i}=f(\frac{i}{n})$, $\forall j \in \left[0, n-1\right]$ and 
$\forall t \in \left[\frac{j}{n}, \frac{j+1}{n}\right]$, we have $C(t)=\sum_{i=0}^{n}s_{i}B\textsupsub{$n$}{$i$}(t)$ $\le$ $s_{j}+n(s_{j+1}-s{j})(t-\frac{j}{n})$, i.e. the Bezier curve is bounded by the CPETS.
\label{lemma:mainLemma}
\end{lemma}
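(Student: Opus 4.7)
The plan is to prove the stronger \emph{global} inequality $C(t)\le L_j(t):= s_j + n(s_{j+1}-s_j)(t-j/n)$ on all of $[0,1]$, not merely on $[j/n,(j+1)/n]$, by writing $L_j-C$ as a degree-$n$ B\'ezier curve whose control points are all non-negative and then invoking the convex hull property directly. The decisive observation is that $L_j(t)$ is affine in $t$, and any affine function admits an exact degree-$n$ Bernstein representation with control points equal to its values at the equal-spacing nodes $i/n$.

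Concretely, I would first use the Bernstein identities $\sum_{i=0}^n B_i^n(t)=1$ and $\sum_{i=0}^n \tfrac{i}{n}B_i^n(t)=t$ to obtain
\begin{equation*}
L_j(t)=\sum_{i=0}^n L_j\!\bigl(\tfrac{i}{n}\bigr)\,B_i^n(t),
\end{equation*}
and therefore $L_j(t)-C(t)=\sum_{i=0}^n\bigl[L_j(i/n)-s_i\bigr]\,B_i^n(t)$. A direct computation gives $L_j(i/n)=s_j+(s_{j+1}-s_j)(i-j)$, so $L_j$ is precisely the affine extension through the two consecutive CPETS vertices $(j/n,s_j)$ and $((j+1)/n,s_{j+1})$. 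The bracketed coefficients $L_j(i/n)-s_i$ will turn out to be non-negative for every $i$ (with equality at $i=j$ and $i=j+1$). Since $B_i^n(t)\ge 0$ and the $B_i^n$'s sum to $1$, the convex hull property of B\'ezier curves then forces $L_j(t)-C(t)\ge 0$ for every $t\in[0,1]$, proving the lemma (and a little more).

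The main obstacle is establishing that $L_j(i/n)\ge s_i$ for all $i$, i.e., that the chord through two consecutive samples of a concave function, when extended, stays above the function at the remaining samples. Because $f$ is concave, the first differences $d_i:=s_{i+1}-s_i$ are non-increasing in $i$. For $i>j+1$ one can then write $s_i-s_j=\sum_{k=j}^{i-1}d_k\le (i-j)\,d_j=(s_{j+1}-s_j)(i-j)$, giving $s_i\le L_j(i/n)$; the case $i<j$ is symmetric, using $d_k\ge d_j$ for $k<j$. With this sample-chord inequality in hand, the convex hull argument above closes the proof without any recourse to subdivision, induction on $n$, or the local Corollary~\ref{corollary:tangentFistLastPoint}.
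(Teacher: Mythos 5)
Your proof is correct, and it takes a genuinely different and arguably cleaner route than the paper's. The paper proves this lemma by induction on the degree: it splits $C$ via the B\'ezier recurrence into two degree-$k$ curves, applies the inductive hypothesis to each, and then uses monotonicity of $C$ to shift the resulting bound from the interval $[\frac{j}{k},\frac{j+1}{k}]$ to $[\frac{j}{k+1},\frac{j+1}{k+1}]$ (handling the first and last subintervals separately via Corollary~\ref{corollary:tangentFistLastPoint}). You instead exploit linear precision of the Bernstein basis to write $L_j(t)-C(t)=\sum_i[L_j(i/n)-s_i]B_i^n(t)$ and reduce everything to the discrete inequality $L_j(i/n)\ge s_i$, which follows from the non-increasing first differences of a concave sequence exactly as you compute. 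Your argument buys several things: it is non-inductive and shorter; it yields the stronger global bound $C(t)\le L_j(t)$ on all of $[0,1]$ rather than only on $[\frac{j}{n},\frac{j+1}{n}]$; and, notably, it never uses the monotonicity hypothesis on $f$ --- only concavity --- so it would also directly establish Corollary~\ref{lemma:ascendingNdescendingCPETS} (the ascending-then-descending case) and make Lemma~\ref{lemma:ascendingNdescending}'s min-of-two-envelopes construction unnecessary for the CPETS bound. The paper's inductive machinery, on the other hand, is reused almost verbatim for the quantitative $O(\frac{1}{n^2})$ estimate in Lemma~\ref{lemma:mainLemmaDiff}, which is the one place your qualitative sign argument would not substitute without adding an explicit bound on the coefficients $L_j(i/n)-s_i$.
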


\begin{proof}
We only need to prove lemma \ref{lemma:mainLemma} for the case that $f(t)$ is ascending. For the descending case, it suffices to replace $t$ with $(1-t)$ and it returns to the ascending case, thanks to a property of Bezier curve that its orientation can be reversed and its control points keep unchanged by replacing  $t$ with $(1-t)$.

The series $s_{i}$ is ascending and concave due to the properties of $f(t)$. Applying lemma \ref{lemma:bezierAscending} and lemma \ref{lemma:bezierConcave}, $C(t)$ is ascending and concave.
We will prove lemma \ref{lemma:mainLemma} by recurrence. For $n \le 2$, it is easy to verify that lemma \ref{lemma:mainLemma} holds using lemma \ref{lemma:simpleAscending}.

Suppose the lemma holds for $n=k$ $(k > 2)$, when $n=k+1$, using the recurrence definition of Bezier curve:

\begin{equation}\begin{split}
C(t)&=\sum_{i=0}^{k+1}s_{i}B\textsupsub{$k+1$}{$i$}(t) \\
&=(1-t)\sum_{i=0}^{k}s_{i}B\textsupsub{$k$}{$i$}(t)+t\sum_{i=0}^{k}s_{i+1}B\textsupsub{$k$}{$i$}(t)
\label{eq:BezierRecurDefinition}
\end{split}\end{equation}

We only need to show that lemma \ref{lemma:mainLemma} holds for intervals $\left[\frac{1}{k+1}, \frac{2}{k+1}\right]$, $\left[\frac{2}{k+1}, \frac{3}{k+1}\right]$, ... , 
$\left[\frac{k-1}{k+1}, \frac{k}{k+1}\right]$, because for the first and the last interval, lemma \ref{lemma:mainLemma} holds due to corollary \ref{corollary:tangentFistLastPoint}.

The recurrence definition (\ref{eq:BezierRecurDefinition}) means that $C(t)$ is a combination of two lower degree Bezier curves. One is formed with $s_{0}, s_{1},$ $...$ $, s_{k}$, the other is formed with $s_{1}, s_{2},$ $...$ $, s_{k+1}$. With the assumption of recurrence, $\forall j \in \left[1, k-1\right]$ and 
$\forall t \in \left[\frac{j}{k}, \frac{j+1}{k}\right]$, we have

\begin{equation}\begin{split}
C(t) \le D(t)
\label{eq:BezierRecurIniquality}
\end{split}\end{equation}
where
\begin{equation}\begin{split}
D(t) = (1-t)(s_{j}+k(s_{j+1}-s_{j})(t-\frac{j}{k})) \\
+t(s_{j+1}+k(s_{j+2}-s_{j+1})(t-\frac{j}{k}))
\label{eq:BezierRecurIniqualityRightSide}
\end{split}\end{equation}

One can easily verify that $D(\frac{j}{k}) = s_{j}$, $D^{\prime}(\frac{j}{k}) = (k+1)(s_{j+1}-s_{j})$, and $D(t)$ is concave. According to lemma \ref{lemma:simpleAscending} and (\ref{eq:BezierRecurIniquality}), we have
\begin{equation}\begin{split}
C(t) &\le D(t) \le s_{j}+(k+1)(s_{j+1}-s_{j})(t-\frac{j}{k}) \\
& \forall t \in \left[\frac{j}{k}, \frac{j+1}{k}\right]
\label{eq:unshiftedInequality}
\end{split}\end{equation}

Now we will use the monotony of $C(t)$. $\forall t \in \left[\frac{j}{k+1}, \frac{j+1}{k+1}\right]$, note $\Delta t = \frac{j}{k} - \frac{j}{k+1} > 0$, we have $t+\Delta t \in \left[\frac{j}{k}, \frac{j+1}{k}\right]$. As we know $C(t)$ is ascending, from (\ref{eq:unshiftedInequality}) we have

\begin{equation}\begin{split}
C(t) &\le C(t+\Delta t) \\
&\le s_{j}+(k+1)(s_{j+1}-s_{j})(t + \Delta t - \frac{j}{k}) \\
&= s_{j}+(k+1)(s_{j+1}-s_{j})(t - (\frac{j}{k} - \frac{j}{k} + \frac{j}{k+1})) \\
&= s_{j}+(k+1)(s_{j+1}-s_{j})(t - \frac{j}{k+1})
\label{eq:shiftedInequality}
\end{split}\end{equation}

This concludes the proof for $n=k+1$. Lemma \ref{lemma:mainLemma} is thus proven.
\end{proof}

\begin{corollary}
$\forall n > 1$, if the upper bound function $f(t)$ defined in $\left[0,1\right]$ is ascending (resp. descending) and concave, for series $s_{i}=f(\frac{i}{n})$, we have $C(t)=\sum_{i=0}^{n}s_{i}B\textsupsub{$n$}{$i$}(t)$ $\le$ $f(t)$.
\label{corollary:monotoneLowerThanFt}
\end{corollary}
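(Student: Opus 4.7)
The plan is to combine Lemma \ref{lemma:mainLemma} with the elementary fact that a concave function lies above its chords. Lemma \ref{lemma:mainLemma} already gives, on each subinterval $[j/n,(j+1)/n]$, the estimate
\[
C(t)\le s_{j}+n(s_{j+1}-s_{j})\bigl(t-\tfrac{j}{n}\bigr),
\]
whose right-hand side is exactly the affine segment of the CPETS joining the points $(j/n,f(j/n))$ and $((j+1)/n,f((j+1)/n))$. So what remains is to show that this piecewise linear interpolant of $f$ at the nodes $i/n$ never exceeds $f$ itself.

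First I would handle the ascending case. Fix $j\in[0,n-1]$ and $t\in[j/n,(j+1)/n]$, and write $t=(1-\lambda)\tfrac{j}{n}+\lambda\tfrac{j+1}{n}$ with $\lambda\in[0,1]$. Concavity of $f$ gives
\[
f(t)\ge(1-\lambda)f\bigl(\tfrac{j}{n}\bigr)+\lambda f\bigl(\tfrac{j+1}{n}\bigr)=s_{j}+n(s_{j+1}-s_{j})\bigl(t-\tfrac{j}{n}\bigr),
\]
which is precisely the CPETS upper bound from Lemma \ref{lemma:mainLemma}. Combining these two inequalities yields $C(t)\le f(t)$ on $[j/n,(j+1)/n]$, and since $j$ was arbitrary, the inequality holds on all of $[0,1]$.

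For the descending case I would reuse the reversal trick already invoked inside the proof of Lemma \ref{lemma:mainLemma}: the substitution $t\mapsto 1-t$ converts a descending concave $f$ into an ascending concave $\tilde f(t)=f(1-t)$, reverses the list of control points (so the new series is $\tilde s_i=s_{n-i}=\tilde f(i/n)$), and produces the Bezier curve $\tilde C(t)=C(1-t)$. Applying the ascending result to $\tilde f$, $\tilde s_i$, $\tilde C$ and then undoing the substitution delivers $C(t)\le f(t)$ for the descending case.

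I do not expect any real obstacle here; the entire content of the corollary beyond Lemma \ref{lemma:mainLemma} is the chord inequality for concave functions, which is an immediate consequence of the definition of concavity. The only care needed is to be explicit that the bound produced by Lemma \ref{lemma:mainLemma} coincides with the secant of $f$ on the relevant subinterval, so that the two estimates chain together cleanly, and to record once and for all that the descending case reduces to the ascending one by the Bezier parameter reversal.
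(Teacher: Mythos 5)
Your proposal is correct and follows exactly the paper's route: the paper also deduces the corollary from Lemma \ref{lemma:mainLemma} together with the observation (already recorded in Section \ref{ProblemFormulation}) that the CPETS lies below the concave function $f(t)$, which is precisely your chord inequality. You merely spell out the details (the secant identification and the $t\mapsto 1-t$ reduction for the descending case) that the paper leaves as ``trivial.''
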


\begin{proof}
Using lemma \ref{lemma:mainLemma}, the proof is trivial due to the fact that CPETS is bounded by $f(t)$.
\end{proof}

\begin{lemma}
$\forall n > 1$, if the concave upper bound function $f(t)$ defined in $\left[0,1\right]$ is ascending in $\left[0, t_{0}\right]$ and is descending in $\left[t_{0}, 1\right]$, for series $s_{i}=f(\frac{i}{n})$, we have $C(t)=\sum_{i=0}^{n}s_{i}B\textsupsub{$n$}{$i$}(t)$ $\le$ $f(t)$.
\label{lemma:ascendingNdescending}
\end{lemma}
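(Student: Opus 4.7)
The plan is to reduce the general concave case to the monotone concave case already handled by Corollary~\ref{corollary:monotoneLowerThanFt}, by dominating $f$ from above with two monotone concave envelopes. Let $M=f(t_0)=\max_{[0,1]} f$ and define
\[
f_1(t) =
\begin{cases}
f(t), & t\le t_0,\\
M, & t\ge t_0,
\end{cases}
\qquad
f_2(t) =
\begin{cases}
M, & t\le t_0,\\
f(t), & t\ge t_0.
\end{cases}
\]
Both functions are concave on $[0,1]$: on each half $f_k$ is either $f$ (concave) or the constant $M$, and at the gluing point the one-sided derivatives $f'(t_0^-)\ge 0$ and $f'(t_0^+)\le 0$ (inherited from the ascending/descending structure of $f$) make $f_k'$ non-increasing across $t_0$. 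Moreover, $f_1$ is ascending and $f_2$ is descending, and $f_k(t)\ge f(t)$ with equality on $[0,t_0]$ for $k=1$ and on $[t_0,1]$ for $k=2$.

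Next I would lift these envelopes to the control polygon. Put $s_i^{(k)} = f_k(i/n)\ge s_i$, and let $C_k(t)=\sum_{i=0}^{n} s_i^{(k)}B\textsupsub{$n$}{$i$}(t)$. Since the Bernstein polynomials are nonnegative on $[0,1]$, linearity gives $C(t)\le C_k(t)$ pointwise for $k=1,2$. Applying Corollary~\ref{corollary:monotoneLowerThanFt} to each monotone concave $f_k$ yields $C_k(t)\le f_k(t)$ on $[0,1]$.

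Finally I would split the domain at the peak: for $t\in[0,t_0]$ we have $C(t)\le C_1(t)\le f_1(t)=f(t)$, and symmetrically for $t\in[t_0,1]$ we have $C(t)\le C_2(t)\le f_2(t)=f(t)$. Together these cover $[0,1]$ and complete the argument.

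The main obstacle is bookkeeping rather than a genuine difficulty: one must carefully verify that the piecewise-defined envelopes $f_1$ and $f_2$ really are concave and monotone so that Corollary~\ref{corollary:monotoneLowerThanFt} applies at all. If one worried that the chain of lemmas tacitly assumed $C^1$ regularity of the bound, this could be handled by mollifying $f_k$ and passing to the limit, since both $C_k(t)$ and $f_k(t)$ depend continuously on the sampled values. However, inspection shows only concavity of $f_k$ is actually used (to guarantee CPETS $\le f_k$), so the envelopes as defined suffice.
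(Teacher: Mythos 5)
Your proposal is correct and follows essentially the same route as the paper: it constructs the identical monotone concave envelopes $f_{1}$ and $f_{2}$ by freezing $f$ at its peak value $f(t_{0})$, applies Corollary \ref{corollary:monotoneLowerThanFt} to each, and uses $f_{k}\ge f$ to dominate $C(t)$. The only cosmetic difference is that you conclude by splitting the domain at $t_{0}$ while the paper takes the pointwise minimum $\min\{f_{1},f_{2}\}=f$, which is the same step.
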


\begin{proof}
Define two functions $f_{1}(t)$ and $f_{2}(t)$ as shown in (\ref{eq:newFunction1}) and (\ref{eq:newFunction2}):
\begin{equation}\begin{split}
f_{1}(t) = \left\{
\begin{array}{l}
f(t) \; if \; t \in \left[0, t_{0}\right] \\
f(t_{0}) \; \text{otherwise}
\end{array}
\right.  
\label{eq:newFunction1}
\end{split}\end{equation}

\begin{equation}\begin{split}
f_{2}(t) = \left\{
\begin{array}{l}
f(t) \; if \; t \in \left[t_{0}, 1\right] \\
f(t_{0}) \; \text{otherwise}
\end{array}
\right.  
\label{eq:newFunction2}
\end{split}\end{equation}

It is easy to verify that $f_{1}(t)$ is ascending, $f_{1}(t) \ge f(t)$, $f_{2}(t)$ is descending, $f_{2}(t) \ge f(t)$, and the point-wise minimum function of $f_{1}(t)$ and $f_{2}(t)$ is $f(t)$, as shown in (\ref{eq:pointWiseMiminum}).

\begin{figure}[tbp]
\begin{center}
\includegraphics[width=6cm]{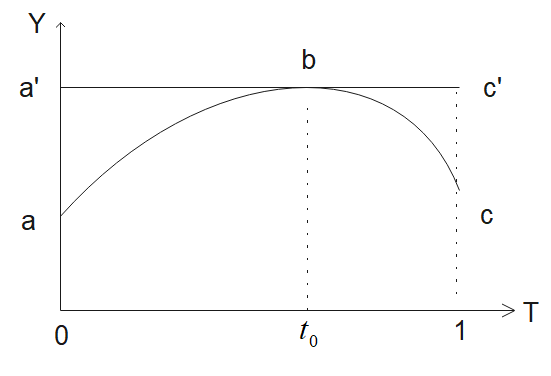}
\end{center}
\vspace{-0.2 in}
\caption{Graphs of functions $f_{1}(t)$ (curve $abc'$), $f_{2}(t)$ (curve $a'bc$) and $f(t)$ (curve $abc$).}
\label{Fig:ascendingNdecending}
\vspace{-0.2 in}
\end{figure}

\begin{equation}\begin{split}
f(t) = \text{min} \left\{f_{1}(t), f_{2}(t)\right\}
\label{eq:pointWiseMiminum}
\end{split}\end{equation}

To better illustrate the idea of this proof, Fig. \ref{Fig:ascendingNdecending} shows the graphs of functions $f_{1}(t)$, $f_{2}(t)$ and $f(t)$. The curve $abc$ represents $f(t)$, $abc'$ represents $f_{1}(t)$, and $a'bc$ represents $f_{2}(t)$. Our objective is to bound $C(t)$ with $f_{1}(t)$ and $f_{2}(t)$ so that we can take the common part of the bounds, which is $f(t)$.

Applying corollary \ref{corollary:monotoneLowerThanFt}, we have

\begin{equation}\begin{split}
\left\{
\begin{array}{l}
\sum_{i=0}^{n}f_{1}(\frac{i}{n})B\textsupsub{$n$}{$i$}(t) \le f_{1}(t) \\
\sum_{i=0}^{n}f_{2}(\frac{i}{n})B\textsupsub{$n$}{$i$}(t) \le f_{2}(t)
\end{array}
\right. 
\label{eq:lowerThanNewDefinedFunc}
\end{split}\end{equation}

As $f_{1}(t) \ge f(t)$ and $f_{2}(t) \ge f(t)$, we have

\begin{equation}\begin{split}
\left\{
\begin{array}{l}
\sum_{i=0}^{n}f(\frac{i}{n})B\textsupsub{$n$}{$i$}(t) \le f_{1}(t) \\
\sum_{i=0}^{n}f(\frac{i}{n})B\textsupsub{$n$}{$i$}(t) \le f_{2}(t)
\end{array}
\right. 
\label{eq:lowerThanNewDefinedFunc2}
\end{split}\end{equation}

From (\ref{eq:pointWiseMiminum}) and (\ref{eq:lowerThanNewDefinedFunc2}), we have

\begin{equation}\begin{split}
\sum_{i=0}^{n}f(\frac{i}{n})B\textsupsub{$n$}{$i$}(t) \le \text{min} \left\{f_{1}(t), f_{2}(t) \right\} = f(t)
\label{eq:lowerThanNewDefinedFunc3}
\end{split}\end{equation}

This concludes the proof.
\end{proof}

\begin{corollary}
$\forall n > 1$, if the concave upper bound function $f(t)$ defined in $\left[0,1\right]$ is ascending in $\left[0, t_{0}\right]$ and is descending in $\left[t_{0}, 1\right]$, for series $s_{i}=f(\frac{i}{n})$, $\forall j \in \left[0, n-1\right]$ and 
$\forall t \in \left[\frac{j}{n}, \frac{j+1}{n}\right]$, we have $C(t)=\sum_{i=0}^{n}s_{i}B\textsupsub{$n$}{$i$}(t)$ $\le$ $s_{j}+n(s_{j+1}-s{j})(t-\frac{j}{n})$, i.e. the Bezier curve is bounded by the CPETS.
\label{lemma:ascendingNdescendingCPETS}
\end{corollary}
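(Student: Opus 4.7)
My plan is to reduce this corollary to Lemma~\ref{lemma:ascendingNdescending} by applying that lemma not to the original concave bound $f$, but to the CPETS itself. Write $g(t)$ for the CPETS, viewed as the continuous piecewise linear function on $[0,1]$ that interpolates the points $(i/n, s_i)$. Since $g(i/n) = s_i = f(i/n)$, the Bezier curve $C(t) = \sum_{i=0}^{n} s_i B\textsupsub{$n$}{$i$}(t)$ is one and the same polynomial whether its control values are thought of as samples of $f$ or of $g$. Consequently, a bound of the form $C(t) \le g(t)$ gives exactly the CPETS inequality claimed, because on $[j/n, (j+1)/n]$ the piecewise linear function $g$ equals $s_j + n(s_{j+1}-s_j)(t - j/n)$ by definition.

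First I would verify that $g$ itself satisfies the hypotheses of Lemma~\ref{lemma:ascendingNdescending}. Concavity of $f$ gives $s_{i+2} - 2s_{i+1} + s_i \le 0$, so the successive segment slopes $n(s_{i+1}-s_i)$ are non-increasing, which is exactly the statement that $g$ is concave. A continuous concave function on the compact interval $[0,1]$ attains its maximum at some point $t_0' \in [0,1]$, and the standard chord argument for concave functions then forces $g$ to be ascending on $[0, t_0']$ and descending on $[t_0', 1]$. So $g$ is a concave ascending-then-descending function, exactly as required by Lemma~\ref{lemma:ascendingNdescending}.

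Then I would invoke Lemma~\ref{lemma:ascendingNdescending} with $g$ playing the role of $f$, obtaining $C(t) \le g(t)$ for all $t \in [0,1]$. Restricting to any subinterval $[j/n, (j+1)/n]$ and writing out $g(t)$ as the corresponding linear segment yields precisely the stated bound.

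The one subtlety I would want to track carefully is that $g$ is only continuous and piecewise linear rather than $C^1$, whereas Lemma~\ref{lemma:simpleAscending}, which sits inside the chain underlying Lemma~\ref{lemma:ascendingNdescending}, is phrased for $C^1$ functions. Inspecting the proofs of Lemma~\ref{lemma:ascendingNdescending}, Corollary~\ref{corollary:monotoneLowerThanFt}, and Lemma~\ref{lemma:mainLemma} shows that Lemma~\ref{lemma:simpleAscending} is only ever applied to the Bezier curve $C(t)$ and to polynomial auxiliaries such as $D(t)$, never to the bounding function itself; the bounding function's role is only to generate a monotone, concave control sequence, which $g$ does. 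So the piecewise-linear nature of $g$ is the one point requiring verification, but it does not constitute a real obstacle.
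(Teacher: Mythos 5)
Your proposal is correct and is essentially the paper's own argument: the paper likewise observes that the CPETS can itself be treated as a concave (ascending-then-descending) upper bound function whose CPETS is itself, and then applies Lemma~\ref{lemma:ascendingNdescending} to it. Your additional check that the piecewise-linear, non-$C^1$ nature of the CPETS causes no trouble is a worthwhile detail the paper leaves implicit.
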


\begin{proof}
One can notice that the CPETS itself can be treated as a concave upper bound function, and the CPETS of a CPETS is still itself. The proof is trivial by applying lemma \ref{lemma:ascendingNdescending}.
\end{proof}

\subsection{Proof of Theorem \ref{theorem:equivalentTheorem}}

\begin{proof}
For the concave upper bound function $f(t)$, there is at most three possibilities: $f(t)$ is ascending, or descending, or ascending then descending. Corollary \ref{lemma:ascendingNdescendingCPETS} and lemma \ref{lemma:mainLemma} cover all possibilities. This concludes the proof.
\end{proof}

\section{Search Space Coverage} \label{SearchSpaceCoverage}
We have proven the safety of the proposed method of choosing control points. Now we need to analyze the advantage of the proposed method, i.e. more search space coverage. From theorem \ref{theorem:baseTheorem} we known that $\sum_{i=0}^{n}f(\frac{i}{n})B\textsupsub{$n$}{$i$}(t)$ ($f(t)$ is the concave upper bound function) is the highest possible Bezier curve. It suffices to analyze the difference between the highest possible Bezier curve and $f(t)$. We are going to prove that the difference is $O(\frac{1}{n^{2}})$ if $f(t)$ is twice differentiable.

\begin{lemma}
For two twice differentiable functions $f_{1}(t)$ and $f_{2}(t)$ defined in $\left[t_{1}, t_{2}\right]$, if $\exists t_{0} \in \left[t_{1}, t_{2}\right]$ such that $f_{1}(t_{0}) = f_{2}(t_{0})$ and $f^{\prime}_{1}(t_{0}) = f^{\prime}_{2}(t_{0})$ , we have $f_{1}(t) - f_{2}(t)$ = $O((t_{2}-t_{1})^2)$.
\label{lemma:simpleTangentO2Diff}
\end{lemma}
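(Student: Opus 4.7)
The plan is to reduce the lemma to a one-line application of Taylor's theorem with Lagrange remainder. First I would set $g(t) = f_1(t) - f_2(t)$, which is twice differentiable on $[t_1, t_2]$ and, by the two hypotheses at $t_0$, satisfies $g(t_0) = 0$ and $g'(t_0) = 0$. Then, for every $t \in [t_1, t_2]$, Taylor's formula expanded at $t_0$ yields some $\xi$ strictly between $t$ and $t_0$ with
$$g(t) \;=\; g(t_0) + g'(t_0)(t - t_0) + \tfrac{1}{2}\,g''(\xi)(t - t_0)^2 \;=\; \tfrac{1}{2}\,g''(\xi)(t - t_0)^2.$$
So the whole argument is essentially the observation that a twice-differentiable function whose value and first derivative both vanish at an interior point agrees with the zero function to second order there.

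Next I would let $M = \sup_{t \in [t_1, t_2]}\lvert g''(t)\rvert = \sup_{t \in [t_1, t_2]} \lvert f_1''(t) - f_2''(t)\rvert$, which is finite because $f_1''$ and $f_2''$ are bounded on the closed interval $[t_1, t_2]$. Using $\lvert t - t_0\rvert \le t_2 - t_1$, the Taylor identity above gives
$$\lvert f_1(t) - f_2(t)\rvert \;=\; \lvert g(t)\rvert \;\le\; \tfrac{1}{2}\,M\,(t - t_0)^2 \;\le\; \tfrac{1}{2}\,M\,(t_2 - t_1)^2,$$
which is exactly the $O\bigl((t_2 - t_1)^2\bigr)$ conclusion, with an explicit constant $M/2$.

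There is no real technical obstacle in this lemma; it is essentially a second-order mean-value estimate. The only point that deserves care is the interpretation of the big-$O$ for the downstream use in Section \ref{SearchSpaceCoverage}, where the lemma is applied on a family of shrinking subintervals of length $1/n$ inside a fixed domain. For the $O(1/n^2)$ coverage bound to follow uniformly in $n$, one needs the implicit constant to be independent of the subinterval, which is automatic here because $M$ can be taken once and for all as the supremum of $\lvert f_1'' - f_2''\rvert$ over the ambient fixed interval. I would mention this uniformity remark at the end of the proof so that the lemma plugs directly into the coverage analysis without further comment.
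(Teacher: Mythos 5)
Your proof is correct and is exactly the argument the paper intends: the paper's own proof is the one-liner ``apply Taylor's formula to $f_2(t)-f_1(t)$,'' and you have simply written out the expansion at $t_0$ with the Lagrange remainder and the uniform constant $M/2$. The only caveat worth a footnote is that ``twice differentiable'' should be read as implying a bounded second derivative (e.g.\ $C^2$) for $M$ to be finite, and your closing remark on the uniformity of the implicit constant across the shrinking subintervals is a useful addition that the paper leaves implicit.
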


\begin{proof}
The proof is trivial by applying Taylor's formula on $f_{2}(t) - f_{1}(t)$.
\end{proof}

\begin{lemma}
If $g_{1}(t)$ is a twice differentiable function defined in $\left[t_{1}, t_{2}\right]$, and $g_{2}(t)$ is the linear function passing $(t_{1}, g_{1}(t_{1}))$ and $(t_{2}, g_{1}(t_{2}))$, then $g_{1}(t) - g_{2}(t)$ $=$ $O((t_{1} - t_{2})^{2})$  
\label{lemma:2ndOrderDiff}
\end{lemma}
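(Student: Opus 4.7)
The plan is to use the classical Lagrange error formula for linear interpolation, which gives an exact pointwise expression for $g_1(t)-g_2(t)$ in terms of $g_1''$, after which the $O((t_2-t_1)^2)$ bound is immediate.

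First, I would fix an arbitrary $t \in (t_1, t_2)$ (the endpoint cases are trivial since $g_1$ and $g_2$ agree there) and introduce the auxiliary function
\begin{equation*}
\phi(s) = g_1(s) - g_2(s) - \frac{(s-t_1)(s-t_2)}{(t-t_1)(t-t_2)}\bigl(g_1(t)-g_2(t)\bigr),
\end{equation*}
defined for $s \in [t_1, t_2]$. By construction, $\phi$ vanishes at the three points $s = t_1$, $s = t$, and $s = t_2$.

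Next, I would apply Rolle's theorem twice: once on $[t_1, t]$ and once on $[t, t_2]$ to produce two distinct zeros of $\phi'$, and then once more between these zeros to obtain a point $\xi \in (t_1, t_2)$ with $\phi''(\xi)=0$. Since $g_2$ is linear, $g_2''\equiv 0$, and differentiating the quadratic factor $(s-t_1)(s-t_2)$ twice gives the constant $2$. Therefore
\begin{equation*}
0 = \phi''(\xi) = g_1''(\xi) - \frac{2}{(t-t_1)(t-t_2)}\bigl(g_1(t)-g_2(t)\bigr),
\end{equation*}
which rearranges to the standard interpolation remainder
\begin{equation*}
g_1(t) - g_2(t) = \tfrac{1}{2}\,g_1''(\xi)\,(t-t_1)(t-t_2).
\end{equation*}

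Finally, since $g_1$ is twice differentiable on the compact interval $[t_1, t_2]$, the quantity $M := \sup_{s\in[t_1,t_2]}|g_1''(s)|$ is finite (assuming the paper's $C^2$ convention), and the elementary bound $|(t-t_1)(t-t_2)| \le \tfrac{1}{4}(t_2-t_1)^2$ gives
\begin{equation*}
|g_1(t)-g_2(t)| \le \tfrac{M}{8}(t_2-t_1)^2 = O\bigl((t_2-t_1)^2\bigr),
\end{equation*}
uniformly in $t$. The only subtle point, and hence the main obstacle, is ensuring $g_1''$ is actually bounded on $[t_1,t_2]$; if ``twice differentiable'' here does not tacitly mean $C^2$, one could instead finish by invoking Lemma \ref{lemma:simpleTangentO2Diff} at the midpoint or at the point where $g_1'$ equals the slope of $g_2$ (which exists by the mean value theorem applied to $g_1$ between $t_1$ and $t_2$), reducing the statement to the previous lemma and sidestepping any regularity concerns about $g_1''$.
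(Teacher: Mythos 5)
Your proof is correct, but it takes a genuinely different route from the paper. The paper sets $h(t)=g_1(t)-g_2(t)$, Taylor-expands $h$ about the arbitrary interior point $t$ and evaluates at both endpoints (where $h$ vanishes), subtracts the two expansions to deduce $h'(t)(t_1-t_2)=O((t_2-t_1)^2)$, and then back-substitutes into one expansion to get $h(t)=O((t_2-t_1)^2)$; it never produces an explicit constant. You instead derive the classical linear-interpolation remainder by applying Rolle's theorem three times to the auxiliary function $\phi$, obtaining the exact identity $g_1(t)-g_2(t)=\tfrac{1}{2}g_1''(\xi)(t-t_1)(t-t_2)$ and hence the sharp bound $\tfrac{M}{8}(t_2-t_1)^2$. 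Your version buys an explicit constant and an exact remainder formula (which would be useful for the paper's stated goal in Section VI of turning the order-of-magnitude estimate into an analytical expression), at the cost of the three-fold Rolle construction; the paper's version is shorter and uses only Taylor's formula. The regularity caveat you raise is real but applies equally to the paper's own argument: writing the Taylor remainder as a uniform $O((t_2-t_1)^2)$ already presupposes a bounded second derivative, so both proofs tacitly read ``twice differentiable'' as $C^2$ (or at least bounded second derivative). One small correction to your closing remark: falling back on Lemma \ref{lemma:simpleTangentO2Diff} does not actually sidestep the regularity issue, since that lemma is itself proved via Taylor's formula under the same tacit assumption, and the reduction also needs a short extra step (the point $t_0$ given by Rolle applied to $h$ matches derivatives but not values, so you must compare $g_1$ to a shifted line through $(t_0,g_1(t_0))$ and then absorb the constant offset, which is itself $O((t_2-t_1)^2)$ by evaluating at $t_1$).
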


\begin{proof}
Note $h(t) = g_{1}(t) - g_{2}(t)$, we have obviously $h(t_{1}) = h(t_{2})=0$ and $h(t)$ is twice differentiable. $\forall t \in \left[t_{1}, t_{2}\right]$, from Taylor's formula we have
{
\setlength\abovedisplayskip{1pt}
\setlength\belowdisplayskip{1pt}
\begin{equation}\begin{split}
0=h(t_{1}) = h(t) + h^{(1)}(t)(t_{1} - t) + O((t_{2}-t_{1})^{2})
\label{eq:Taylor1}
\end{split}\end{equation}
}
{
\setlength\abovedisplayskip{1pt}
\setlength\belowdisplayskip{1pt}
\begin{equation}\begin{split}
0=h(t_{2}) = h(t) + h^{(1)}(t)(t_{2} - t) + O((t_{2}-t_{1})^{2})
\label{eq:Taylor2}
\end{split}\end{equation}
}
Calculating the difference of (\ref{eq:Taylor1}) and (\ref{eq:Taylor2}) yields
{
\setlength\abovedisplayskip{1pt}
\setlength\belowdisplayskip{1pt}
\begin{equation}\begin{split}
0=h^{(1)}(t)(t_{1}-t_{2})+O((t_{2}-t_{1})^{2})
\label{eq:Taylor3}
\end{split}\end{equation}
}
From (\ref{eq:Taylor3}) we know that $h^{(1)}(t)(t_{1}-t_{2})$ = $O((t_{2}-t_{1})^{2})$, so $h^{(1)}(t)(t_{1} - t)$ = $O((t_{2}-t_{1})^{2})$ $\forall t \in \left[t_{1}, t_{2}\right]$. From (\ref{eq:Taylor1}) we know that $h(t)$ = $-h^{(1)}(t)(t_{1} - t)$ + $O((t_{2}-t_{1})^{2})$ = $O((t_{2}-t_{1})^{2})$. This concludes the proof.
\end{proof}

\begin{corollary}
Suppose $s_{i}$ is a concave (convex) series, the Bezier curve $C(t)=\sum_{i=0}^{n}s_{i}B\textsupsub{$n$}{$i$}(t)$ is inferior than the first and the last segment of the CPETS, and the difference is $O(\frac{1}{n^{2}})$.
\label{corollary:tangentFistLastPointDiff}
\end{corollary}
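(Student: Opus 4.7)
The plan is to combine the qualitative "inferior" claim, which has already been established, with the quantitative Taylor estimate of Lemma \ref{lemma:simpleTangentO2Diff} to upgrade it to an $O(1/n^{2})$ bound. The first half of the statement (that $C(t)$ lies below the first and last segments of the CPETS) is exactly Corollary \ref{corollary:tangentFistLastPoint}, so no further argument is needed there and the substantive work is in the rate.

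The key observation I would exploit is the endpoint interpolation and endpoint-tangency property of Bezier curves: $C(0)=s_{0}$, $C(1)=s_{n}$, and the standard identities $C^{\prime}(0)=n(s_{1}-s_{0})$, $C^{\prime}(1)=n(s_{n}-s_{n-1})$. The first CPETS segment, call it $L_{0}(t)$, is the line from $(0,s_{0})$ to $(1/n,s_{1})$, which has slope $n(s_{1}-s_{0})$; the last CPETS segment $L_{n-1}(t)$ is the line from $((n-1)/n,s_{n-1})$ to $(1,s_{n})$ with slope $n(s_{n}-s_{n-1})$. Hence $C$ and $L_{0}$ share both value and first derivative at $t=0$, and likewise $C$ and $L_{n-1}$ at $t=1$.

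With these matched-value-and-derivative conditions in hand, I would apply Lemma \ref{lemma:simpleTangentO2Diff} to the pair $(C,L_{0})$ on the interval $[0,1/n]$ and to the pair $(C,L_{n-1})$ on $[(n-1)/n,1]$. Both functions in each pair are polynomials, hence $C^{2}$, and each interval has length $1/n$, so the lemma immediately gives $C(t)-L_{0}(t)=O((1/n)^{2})=O(1/n^{2})$ on $[0,1/n]$ and analogously on $[(n-1)/n,1]$. Combined with Corollary \ref{corollary:tangentFistLastPoint}, this yields the claim.

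The only subtle point, and the place I would be most careful, is the uniformity of the hidden constant in $n$. The Taylor remainder behind Lemma \ref{lemma:simpleTangentO2Diff} depends on a bound for the second derivative of $C-L_{0}$ (respectively $C-L_{n-1}$). Since $L_{0}$ and $L_{n-1}$ are linear, this reduces to bounding $C^{(2)}$ near the endpoints, and the closed form $C^{(2)}(t)=n(n-1)\sum_{i=0}^{n-2}(s_{i+2}-2s_{i+1}+s_{i})B^{n-2}_{i}(t)$ shows that this is controlled by the second differences of the $s_{i}$. For a concave series with $s_{i}=f(i/n)$ and $f\in C^{2}$, those second differences are of order $1/n^{2}$, so the factor $n(n-1)$ is absorbed and $C^{(2)}$ stays $O(1)$, making the constant in the final $O(1/n^{2})$ bound genuinely $n$-independent.
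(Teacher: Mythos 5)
Your proof follows the paper's own route exactly: the qualitative half is Corollary \ref{corollary:tangentFistLastPoint}, and the rate comes from applying Lemma \ref{lemma:simpleTangentO2Diff} to $C$ and the tangent first/last CPETS segments on the length-$\frac{1}{n}$ end intervals, using $C(0)=s_{0}$, $C^{\prime}(0)=n(s_{1}-s_{0})$ and the symmetric identities at $t=1$. Your closing remark about the uniformity of the hidden constant --- that one needs the second differences $s_{i+2}-2s_{i+1}+s_{i}$ to be $O(\frac{1}{n^{2}})$, as they are when $s_{i}=f(\frac{i}{n})$ with $f$ twice differentiable, so that $C^{(2)}$ stays $O(1)$ --- is a point the paper leaves implicit but which is genuinely needed for the asymptotic claim to hold as $n\to\infty$.
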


\begin{proof}
The proof is trivial by applying corollary \ref{corollary:tangentFistLastPoint} and lemma \ref{lemma:simpleTangentO2Diff}.
\end{proof}

\begin{corollary}
If the concave upper bound function $f(t)$ is twice differentiable, the difference between $f(t)$ and the CPETS is $O(\frac{1}{n^{2}})$, where $n$ is the number of control points.
\label{corollary:diffUpperBoundCPETS}
\end{corollary}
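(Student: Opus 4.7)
The plan is to reduce this to a piecewise application of Lemma \ref{lemma:2ndOrderDiff}. By definition, on each subinterval $[\frac{i}{n}, \frac{i+1}{n}]$ (for $i = 0, 1, \ldots, n-1$), the CPETS coincides with the linear function passing through the endpoints $(\frac{i}{n}, f(\frac{i}{n}))$ and $(\frac{i+1}{n}, f(\frac{i+1}{n}))$. This is exactly the setting of Lemma \ref{lemma:2ndOrderDiff} with $g_1 = f$ (twice differentiable by hypothesis) and $g_2$ equal to the corresponding CPETS segment, over an interval of length $\frac{1}{n}$.

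First, I would invoke Lemma \ref{lemma:2ndOrderDiff} on each of the $n$ subintervals to obtain
\begin{equation*}
f(t) - \text{CPETS}(t) = O\!\left(\Bigl(\tfrac{i+1}{n} - \tfrac{i}{n}\Bigr)^{2}\right) = O\!\left(\tfrac{1}{n^{2}}\right),
\qquad t \in \left[\tfrac{i}{n}, \tfrac{i+1}{n}\right].
\end{equation*}
Then I would upgrade these $n$ local bounds into a single global bound on $[0,1]$. The only care required is uniformity of the hidden constant across $i$.

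The main obstacle, and really the only nontrivial point, is ensuring that the $O$-constant does not degrade as $i$ ranges over $\{0, 1, \ldots, n-1\}$. Tracing through the proof of Lemma \ref{lemma:2ndOrderDiff}, the hidden constant on the $i$-th subinterval is controlled by $\sup_{[i/n,(i+1)/n]} |f''|$, since $g_2$ is linear and thus contributes nothing to the second-order Taylor remainder. Under the standing assumption that $f$ is twice differentiable on the closed interval $[0,1]$ (so that $f''$ is bounded on $[0,1]$, say by $M := \sup_{[0,1]} |f''|$), this supremum is dominated by $M$ uniformly in $i$. Therefore the bound $|f(t) - \text{CPETS}(t)| \le C M / n^{2}$ holds for some absolute $C$ on every subinterval, and hence on the union $[0,1]$, which concludes the proof.
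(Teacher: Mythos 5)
Your proof is correct and follows essentially the same route as the paper, which likewise just applies Lemma \ref{lemma:2ndOrderDiff} on each subinterval $\left[\frac{i}{n}, \frac{i+1}{n}\right]$. Your added attention to the uniformity of the $O$-constant across the $n$ subintervals (via a global bound on $|f''|$) is a point the paper leaves implicit, and it is exactly the right thing to check.
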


\begin{proof}
The proof is obvious by applying lemma \ref{lemma:2ndOrderDiff} on interval $\left[\frac{i}{n}, \frac{i+1}{n}\right]$ ($i=0,\;1,\;...\;,\;n-1$).
\end{proof}

With corollary \ref{corollary:diffUpperBoundCPETS}, it suffices to prove that the difference between the highest possible Bezier curve $\sum_{i=0}^{n}f(\frac{i}{n})B\textsupsub{$n$}{$i$}(t)$ and the CPETS is $O(\frac{1}{n^{2}})$.

\begin{lemma}
$\forall n > 1$, if the upper bound function $f(t)$ defined in $\left[0,1\right]$ is concave and twice differentiable, for series $s_{i}=f(\frac{i}{n})$, $\forall j \in \left[0, n-1\right]$ and 
$\forall t \in \left[\frac{j}{n}, \frac{j+1}{n}\right]$, note $C(t)=\sum_{i=0}^{n}s_{i}B\textsupsub{$n$}{$i$}(t)$, we have $C(t)$ $-$ $(s_{j}+n(s_{j+1}-s{j})(t-\frac{j}{n}))$=$O(\frac{1}{n^{2}})$, i.e. the difference between the Bezier curve and CPETS is $O(\frac{1}{n^{2}})$.
\label{lemma:mainLemmaDiff}
\end{lemma}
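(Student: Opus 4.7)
The plan is to mirror the inductive argument of Lemma \ref{lemma:mainLemma}, strengthening its one-sided bound $C(t)\le\text{CPETS}(t)$ into a two-sided $O(1/n^{2})$ estimate. Since Lemma \ref{lemma:mainLemma} already gives $\text{CPETS}(t)-C(t)\ge 0$, the remaining task is to upper-bound this nonnegative gap. As in the earlier proofs, I first reduce to the monotone case: the ascending-then-descending case is handled by the $f=\min(f_{1},f_{2})$ splitting of Lemma \ref{lemma:ascendingNdescending}, and the purely descending case by the reflection $t\mapsto 1-t$, which leaves only the ascending concave case.

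In the ascending case, the base case $n=2$ consists only of the two boundary sub-intervals $[0,1/n]$ and $[(n-1)/n,1]$, both of which are handled by Corollary \ref{corollary:tangentFistLastPointDiff}: $C(t)$ is tangent to the first (resp.\ last) CPETS segment at $t=0$ (resp.\ $t=1$), so Lemma \ref{lemma:simpleTangentO2Diff} gives a gap of order $(1/n)^{2}$ on any sub-interval of length $1/n$ containing the tangency point. The same observation disposes of the boundary sub-intervals for any $n$, so only the interior sub-intervals with $1\le j\le n-2$ need the induction.

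For the inductive step, I would run the same recursion used in Lemma \ref{lemma:mainLemma}. Writing $C(t)=(1-t)C_{1}(t)+tC_{2}(t)$ with $C_{1},C_{2}$ the degree-$(n-1)$ Bezier curves on the reduced control sets $\{s_{0},\dots,s_{n-1}\}$ and $\{s_{1},\dots,s_{n}\}$, the induction hypothesis yields $\text{CPETS}^{(n-1)}_{i}(t)-C_{i}(t)=O(1/(n-1)^{2})$ on the degree-$(n-1)$ sub-interval containing $t$. Combining these two inequalities with the weights $(1-t)$ and $t$ gives $C(t)\le D(t)+O(1/(n-1)^{2})$, with $D(t)$ the auxiliary function from Equation \ref{eq:BezierRecurIniqualityRightSide}. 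I would then shift the argument by $\Delta t=j/(n(n-1))=O(1/n^{2})$ and exploit the monotonicity of $C$ exactly as in Equation \ref{eq:shiftedInequality} to convert the degree-$(n-1)$ CPETS bound into the degree-$n$ CPETS bound at $t$, the shift itself contributing an extra $\|C'\|_{\infty}\,\Delta t=O(1/n^{2})$.

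The hard part will be keeping the implicit constant in $O(1/n^{2})$ from drifting upward through the induction. To do this I would make every error term explicit in terms of $\|f'\|_{\infty}$ and $\|f''\|_{\infty}$ (both finite because $f\in C^{2}[0,1]$), so that the inductive hypothesis and the shift contribution are controlled by the \emph{same} constant at every step. If this uniform control cannot be closed by pure induction alone, the fallback is to bypass the recursion and estimate $C(t)$ directly via a Taylor expansion of $f$ about the sub-interval midpoint, which produces the characteristic $t(1-t)$-type factor that is responsible for the $1/n^{2}$ rate on each sub-interval.
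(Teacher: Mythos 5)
Your core recursion is the same as the paper's: dispatch the two boundary subintervals with Corollary \ref{corollary:tangentFistLastPointDiff}, split $C$ into the two degree-$(n-1)$ curves, apply the induction hypothesis to get $C(t)=D(t)+O(1/(n-1)^2)$ with $D$ as in (\ref{eq:BezierRecurIniqualityRightSide}), and undo the node mismatch by the shift $\Delta t=\frac{j}{n(n-1)}$. Two deviations are worth noting. First, the preliminary reduction to the ascending case is unnecessary for this lemma (the paper's induction runs directly on a general concave $f$, replacing the monotonicity argument of Lemma \ref{lemma:mainLemma} by a first-order Taylor expansion of $C$ in the shift step), and the $\min(f_1,f_2)$ device does not transfer from a one-sided containment to a two-sided error estimate: Lemma \ref{lemma:ascendingNdescending} only yields $C\le f_1$ and $C\le f_2$. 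To split the \emph{gap} you would need the additive identity $f=f_1+f_2-f(t_0)$, and then $f_1$ and $f_2$ are no longer twice differentiable at $t_0$, so the monotone case cannot be invoked for them as stated.

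Second, and decisively, the difficulty you flag --- keeping the implicit constant from drifting through the induction --- is not a technicality but the point where the argument fails, and your fallback does not rescue it. (i) $\Delta t=\frac{j}{n(n-1)}$ is $O(1/n^2)$ only for bounded $j$; for $j$ of order $n$ it is of order $1/n$, so the shift contributes $\lVert C'\rVert_\infty\,\Delta t=O(1/n)$ on the rightmost interior subintervals. (ii) Even granting a uniform per-step cost $c/k^2$, the recurrence $E_{k+1}\le E_k+c/k^2$ only gives $E_n\le E_2+c\sum_{k\ge 2}k^{-2}=O(1)$; nothing in the recursion causes the inherited error to shrink, so no choice of constants closes the induction at the $O(1/n^2)$ level. (iii) The target bound is in fact unattainable: for $f(t)=-t^2$ one has $C(t)=\sum_{i=0}^{n}f(\tfrac{i}{n})B^{n}_{i}(t)=-t^2-\tfrac{t(1-t)}{n}$, so at $t=\tfrac12$ (a CPETS node for even $n$) the gap between the CPETS and $C$ equals exactly $\tfrac{1}{4n}$. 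This is the classical $\Theta(1/n)$ Bernstein approximation rate, and it is also what your fallback midpoint Taylor expansion produces: the $t(1-t)$ factor comes with a single power of $1/n$, namely $\tfrac{t(1-t)}{2n}f''$, not $1/n^2$. Be aware that the paper's own proof shares defects (i) and (ii) (it likewise asserts $\Delta t=O(\tfrac{1}{(k+1)^2})$ and discards the accumulated errors), so mirroring it cannot succeed; the strongest statement provable along these lines is $O(1/n)$.
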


\begin{proof}
We will prove it by recurrence, similar to the proof of lemma \ref{lemma:mainLemma}. For $n \le 2$, it is easy to verify that lemma \ref{lemma:mainLemmaDiff} holds using lemma \ref{lemma:simpleTangentO2Diff}. 

Suppose lemma \ref{lemma:mainLemmaDiff} holds for $n=k$ $(k > 2)$. When $n=k+1$, we only need to show that lemma \ref{lemma:mainLemmaDiff} holds for intervals $\left[\frac{1}{k+1}, \frac{2}{k+1}\right]$, $\left[\frac{2}{k+1}, \frac{3}{k+1}\right]$, ... , 
$\left[\frac{k-1}{k+1}, \frac{k}{k+1}\right]$, because for the first and the last interval, lemma \ref{lemma:mainLemmaDiff} holds due to corollary \ref{corollary:tangentFistLastPointDiff}.

With the assumption of recurrence, $\forall j \in \left[1, k-1\right]$ and 
$\forall t \in \left[\frac{j}{k}, \frac{j+1}{k}\right]$, using the recurrence definition of Bezier curve (\ref{eq:BezierRecurDefinition}), we have:
{
\setlength\abovedisplayskip{1pt}
\setlength\belowdisplayskip{1pt}
\begin{equation}\begin{split}
C(t) &= D(t) + O(\frac{1}{{k}^{2}}) \\
 &= D(t) + O(\frac{1}{(k+1)^{2}})
\label{eq:BezierRecurIniqualityDiff}
\end{split}\end{equation}
}
where $D(t)$ is defined in (\ref{eq:BezierRecurIniqualityRightSide}). One can use lemma \ref{lemma:simpleTangentO2Diff} to verify that 
{
\setlength\abovedisplayskip{1pt}
\setlength\belowdisplayskip{1pt}
\begin{equation}\begin{split}
D(t) = s_{j}+(k+1)(s_{j+1}-s_{j})(t - \frac{j}{k}) + O(\frac{1}{(k+1)^{2}})
\label{eq:shiftedInequalityLinearDiff}
\end{split}\end{equation}
}
$\forall t \in \left[\frac{j}{k+1}, \frac{j+1}{k+1}\right]$, note $\Delta t = \frac{j}{k} - \frac{j}{k+1} > 0$, one can verify that $t+\Delta t \in \left[\frac{j}{k}, \frac{j+1}{k}\right]$, and $\Delta t = O(\frac{1}{(k+1)^{2}})$. Thus $C(t)$ = $C(t+\Delta t) - C^{\prime}(t+\Delta t)\Delta t + o(\Delta t)$ = $C(t+\Delta t) + O(\frac{1}{(k+1)^{2}})$. From (\ref{eq:BezierRecurIniqualityDiff}), (\ref{eq:BezierRecurIniqualityRightSide}) and (\ref{eq:shiftedInequalityLinearDiff}), we have
{
\setlength\abovedisplayskip{1pt}
\setlength\belowdisplayskip{1pt}
\begin{equation}\begin{split}
C(t) &= C(t+\Delta t) + O(\frac{1}{(k+1)^{2}}) \\
&= D(t+\Delta t) + O(\frac{1}{(k+1)^{2}}) \\
&= s_{j}+(k+1)(s_{j+1}-s_{j})(t + \Delta t - \frac{j}{k}) \\
&+ O(\frac{1}{(k+1)^{2}}) \\
&= s_{j}+(k+1)(s_{j+1}-s_{j})(t - (\frac{j}{k} - \frac{j}{k} + \frac{j}{k+1})) \\
&+ O(\frac{1}{(k+1)^{2}}) \\
&= s_{j}+(k+1)(s_{j+1}-s_{j})(t - \frac{j}{k+1}) + O(\frac{1}{(k+1)^{2}})
\label{eq:shiftedInequalityDiff}
\end{split}\end{equation}
}
This concludes the proof for $n=k+1$. Lemma \ref{lemma:mainLemmaDiff} is thus proven.
\end{proof}

Lemma \ref{lemma:mainLemmaDiff} and corollary \ref{corollary:diffUpperBoundCPETS} show that the difference between the upper bound function and the highest possible Bezier curve is $O(\frac{1}{n^{2}})$, where $n$ is the number of control points.

\section{Simulation results} \label{Simulation}

\begin{figure}[tbp]
\begin{center}
\includegraphics[width=9cm]{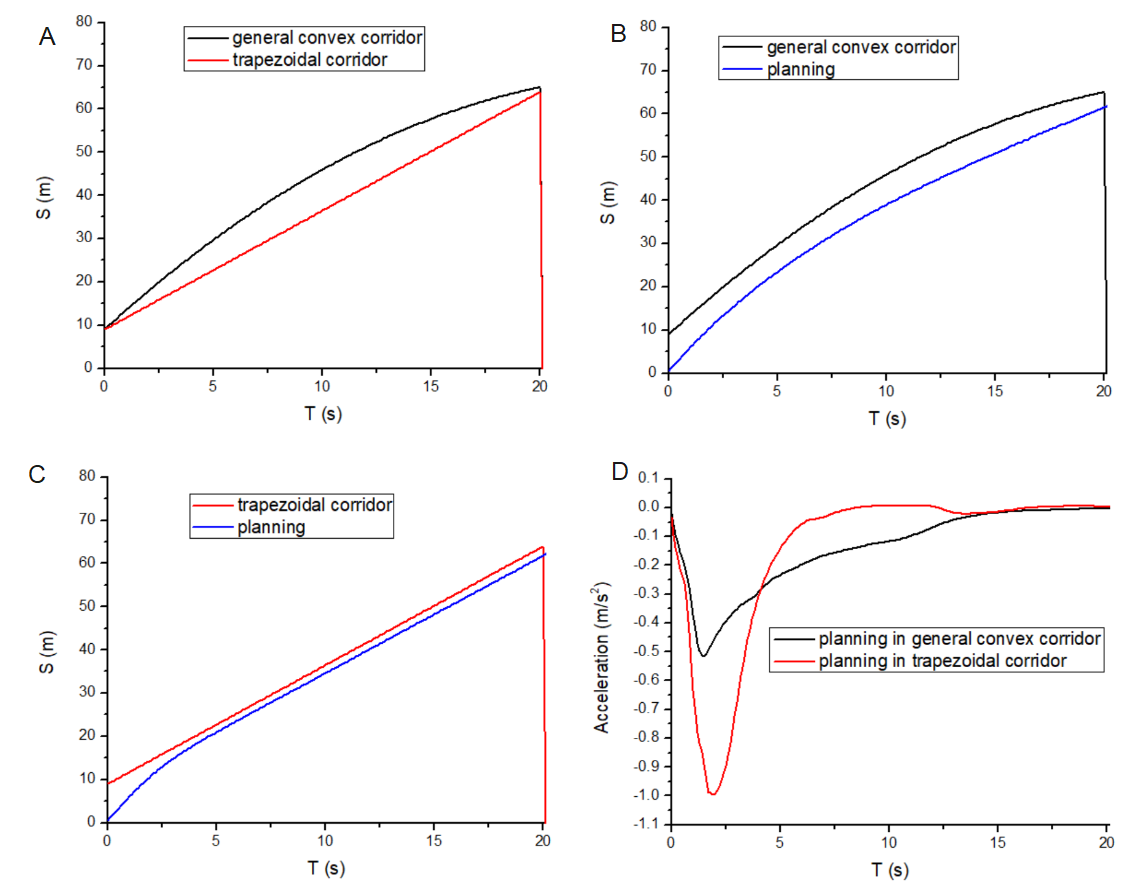}
\end{center}
\vspace{-0.2 in}
\caption{Comparison of performance. A) The shapes of two types of corridors. B) Planning result in general convex corridal. C) Planning result in trapezoidal corridor. D) Comparison of acceleration from two planning results.}
\label{Fig:comparisonOfPerformance}
\vspace{-0.2 in}
\end{figure}

In this section, the motion planning results based on the general convex corridors are compared with trapezoidal corridors-based planning. The same set of parameters (weights, time horizon) were used for both cases to make the results comparable. The scenario includes an ego vehicle and a decelerating front vehicle, similar to the scenario depicted in Fig. \ref{Fig:comparison_of_shapes} A1. The corresponding corridors are shown in Fig. \ref{Fig:comparisonOfPerformance} A. A desirable planning should consider the smoothness and the closeness between the planned trajectory and the reference trajectory. It is also reasonable to consider a higher weight for the trajectory points in the near future compared to the later-time points. Therefore, the cost function is defined as (\ref{eq:costFunction}):
{
\setlength\abovedisplayskip{1pt}
\setlength\belowdisplayskip{1pt}
\begin{equation}\begin{split}
&Cost=\\
&w_{00} \int_{0}^{T_{s}} (S(t)-S_{r}(t))^2 \,dx+w_{01} \int_{T_{s}}^{T_{l}} (S(t)-S_{r}(t))^2 \,dx \\
&+w_{10} \int_{0}^{T_{s}} (\ddot S(t)- \ddot S_{r}(t))^2 \,dx+w_{11} \int_{T_{s}}^{T_{l}} (\ddot S(t)- \ddot S_{r}(t))^2 \,dx \\ 
&+w_{20} \int_{0}^{T_{s}} (\dddot S(t))^2 \,dx+w_{21} \int_{T_{s}}^{T_{l}} (\dddot S(t))^2 \,dx, \\
\label{eq:costFunction}
\end{split}\end{equation}
}
where $w_{00}=5.0$, $w_{01}=2.5$, $w_{10}=10.0$, $w_{11}=3.0$, $w_{20}=25.0$, $w_{21}=10.0$, $T_{l}=20.0$, $T_{s}=6.0$. The terms $T_{l}$ and $T_{s}$ mean that the planning has an extra emphasis on the near future points. The initial speed of the planning is 5.5 $m/s$. $S_{r}(t)$ is a reference trajectory, which is generated by a uniformly accelerated motion with an acceleration of -0.05 $m/s^{2}$. Note that we do not require the reference trajectory to be safe here. A reference trajectory with high acceleration could be unsafe, but it may result in a planned trajectory with a higher acceleration. 

The motion planning problem is then converted into a QP (quadratic programming) problem, whose details are presented in~\cite{ding2019safe} \cite{li2021speed}. OSQP 0.5.1 is adopted as the solver to this problem. As shown in Fig.~\ref{Fig:comparisonOfPerformance}, the general convex corridor results in a larger search space, leading to much smaller deceleration compared to that of the trapezoidal corridor (-0.52 $m/s^{2}$ $vs$ -1.0 $m/s^{2}$) and subsequently less harsh brakes and improved smoothness.

To evaluate the performance of this algorithm in a real-time, multi-frame condition, a cut-in scenario is set up in our simulator (developed based on ROS). The simulation scenario is based on an real-world scenario encountered in road test. The ego vehicle is cruising under a speed limit of 60 $km/h$ (16.77 $m/s$\footnote{Due to the performance of the speed controller, the ego vehicle's actual speed is around 16 $m/s$.}), while another vehicle cuts in at $t=1.0$ s, at a distance of 22 $m$ and at a speed of 7 $m/s$. The cut-in vehicle decelerates at 0.5 $m/s^2$ for 3 seconds, then continues moving at a constant speed. There is an additional 5-meter safety margin behind the cut-in vehicle. The close-loop longitudinal dynamics of the ego vehicle is simulated by a first-order dynamics with a time constant of 0.3 $s$. The corridors are generated with the algorithm presented in~\cite{xin2021enable}~and~\cite{li2021speed}, respectively for general convex shape and trapezoidal shape corridors.

\begin{figure}[tbp]
\begin{center}
\includegraphics[width=9cm]{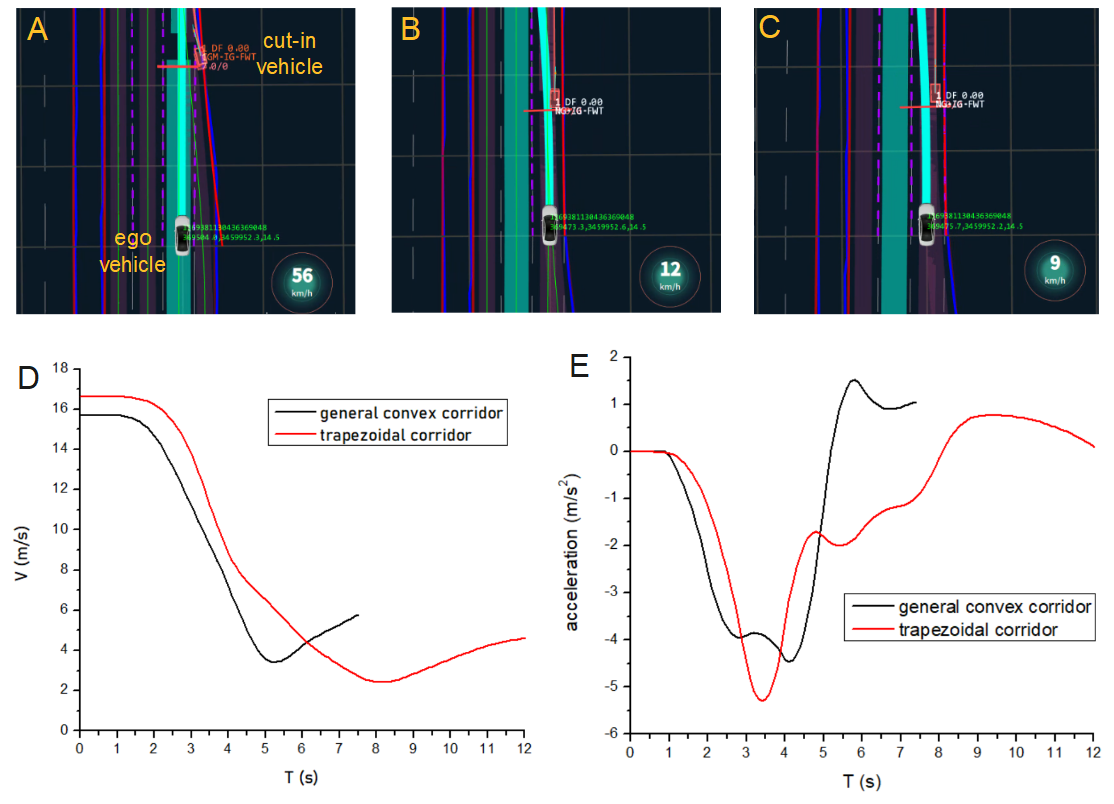}
\end{center}
\vspace{-0.2 in}
\caption{Comparison of performance in simulator. A) The cut-in scenario from top view. B) The lowest speed achieved with general convex corridor. C) The lowest speed achieved with trapezoidal corridor. D) Comparison of ego vehicle's speed from two planning results. E) Comparison of ego vehicle's acceleration from two planning results.}
\label{Fig:comparisonInSimulator}
\vspace{-0.2 in}
\end{figure}

The comparison is shown in Fig.~\ref{Fig:comparisonInSimulator}. The lowest deceleration is -4.46 $m/s^2$ using general convex corridors, compared to -5.23 $m/s^2$ of using trapezoidal corridors. In the case of using general convex corridors, the lowest speed is registered at 3.41 $m/s$, compared to that of trapezoidal corridors at 2.42 $m/s$. Our proposed approach decelerates earlier, which helps in reducing the peak value of deceleration and the length of the deceleration period as an indication of an improved overall comfort.
\section{Discussion} \label{Discussion}
This paper contains mainly two parts, a) deriving a sufficient condition for the convex hull property in spatio-temporal corridors, and b) estimating unsearched space. One can note that there is no continuity requirement for the concave upper bound function in a), but requires it to be twice differentiable in b). The requirement in b) can be relaxed to $C^{1}$ using Rolle's remainder of Taylor's theorem but the error is also weakened to $o(\frac{1}{n})$. Given the fact that the difference between the CPETS and the upper bound function is already $O(\frac{1}{n^2})$, we can not achieve better result than $O(\frac{1}{n^2})$.

In this research, the unsearched space is quantified as an order of magnitude instead of an analytical expression. An expression can be obtained by calculating the error each time applying an inequality in the proof of lemma \ref{lemma:mainLemmaDiff}.

This paper is based on Bezier curves, however, since B-spline curves are piece-wise Bezier curve, the theorems in this paper could potentially be extended to B-spline curves with some modifications.
\section{CONCLUSIONS} \label{CONCLUSIONS}
In this paper, we proved a sufficient condition to guarantee the convex hull property for general convex corridors in spatio-temporal frames. This theorem allows for using more complicated shapes to represent constraints as a form of spatio-temporal corridors. It was also shown that the uncovered search space can be shrunk to $O(\frac{1}{n^2})$ compared to $O(1)$ of trapezoidal corridors, which is the best possible result under continuity assumptions.
The use of general convex corridors yields less harsh brakes and enhanced the overall smoothness, specifically in scenarios involving objects with frequent acceleration/deceleration speed profile, which require more complex corridor shapes.

%%%%%%%%%%%%%%%%%%%%%%%%%%%%%%%%%%%%%%%%%%%%%%%%%%%%%%%%%%%%%%%%%%%%%%%%%%%%%%%
\bibliographystyle{IEEEtran}
\bibliography{references}

\end{document}